\documentclass{article}

\PassOptionsToPackage{numbers, compress}{natbib}


\usepackage[preprint]{neurips_2022}



\usepackage[utf8]{inputenc} 
\usepackage[T1]{fontenc}    
\usepackage{hyperref}       
\usepackage{url}            
\usepackage{booktabs}       
\usepackage{amsfonts}       
\usepackage{nicefrac}       
\usepackage{microtype}      
\usepackage{xcolor}         

\usepackage{subfig}
\usepackage{graphicx}
\usepackage{bm}
\usepackage{amsthm}

\newtheorem{lemma}{Lemma}

\usepackage{amsmath}
\usepackage{amssymb}
\usepackage{multirow}
\usepackage[font=small, skip=1pt]{caption}
\setlength{\textfloatsep}{10pt}
\setlength{\dbltextfloatsep}{10pt}
\usepackage{listings}

\definecolor{codegreen}{rgb}{0,0.6,0}
\definecolor{codegray}{rgb}{0.5,0.5,0.5}
\definecolor{codepurple}{rgb}{0.58,0,0.82}
\definecolor{backcolour}{rgb}{0.95,0.95,0.92}

\lstdefinestyle{mystyle}{
  commentstyle=\color{codegreen},
  keywordstyle=\color{black},
  numberstyle=\tiny\color{codegray},
  basicstyle=\ttfamily\footnotesizea,
  breakatwhitespace=false,
  breaklines=true,
  keepspaces=true,
  numbers=left,
  numbersep=5pt,
  showspaces=false,
  showstringspaces=false,
  showtabs=false,
  tabsize=2
}
\lstset{style=mystyle}

\def\eg{e.g.} 
\def\ie{i.e.} 
\usepackage[capitalize]{cleveref}
\crefname{section}{Sec.}{Secs.}
\Crefname{section}{Section}{Sections}
\Crefname{table}{Table}{Tables}
\crefname{table}{Tab.}{Tabs.}

\title{Towards Privacy-Preserving, Real-Time and Lossless Feature Matching}

%

\author{
  Qiang Meng \\
  Algorithm Research, Aibee Inc. \\
  ustcmq@gmail.com
  \And 
  Feng Zhou \\
  Algorithm Research, Aibee Inc. \\
  fzhou@aibee.com
}
\begin{document}

\maketitle

\begin{abstract}
  Most visual retrieval applications store feature vectors for downstream matching tasks.
  These vectors, from where user information can be spied out, will cause privacy leakage if not carefully protected.
  To mitigate privacy risks, current works primarily utilize non-invertible transformations or fully cryptographic algorithms.
  However, transformation-based methods usually fail to achieve satisfying matching performances while cryptosystems suffer from heavy computational overheads.
  In addition, secure levels of current methods should be improved to confront potential adversary attacks.
  To address these issues, this paper proposes a plug-in module called SecureVector that protects features by random permutations, 4L-DEC converting and existing homomorphic encryption techniques.
  For the first time, SecureVector achieves real-time and lossless feature matching among sanitized features, along with much higher security levels than current state-of-the-arts.
  Extensive experiments on face recognition, person re-identification, image retrieval, and privacy analyses demonstrate the effectiveness of our method.
  Given limited public projects in this field, codes of our method and implemented baselines are made open-source in \url{https://github.com/IrvingMeng/SecureVector}.
\end{abstract}

\section{Introduction}
\begin{figure}[htb!]
  \centering
  \subfloat[
  Overview of the proposed SecureVector.\label{fig:intro}
  ]
  {
    \includegraphics[trim={0 0 0 7pt},clip, width=0.45\textwidth]{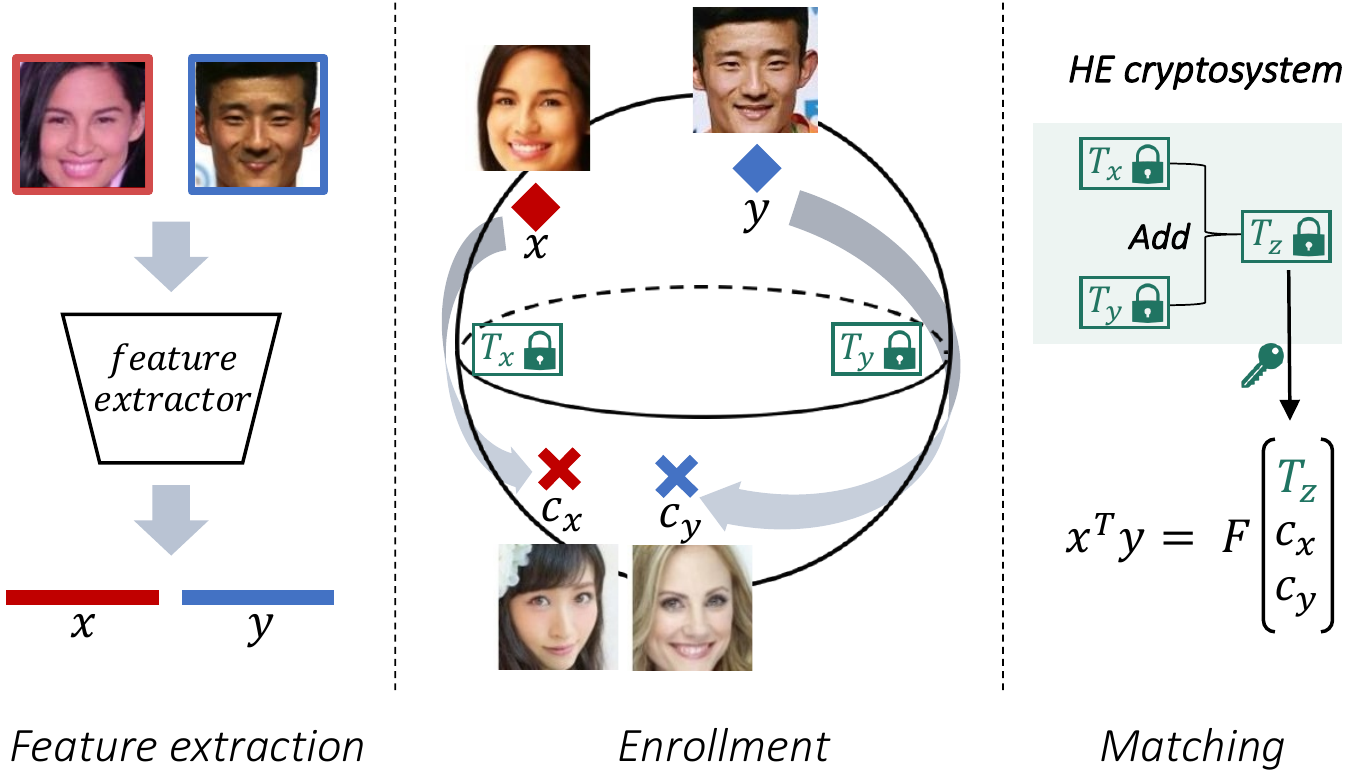}      
  }
  \quad 
  \subfloat[
  Performance-speed analysis of plug-in methods.\label{fig:intro0}
  ]
  {
    \includegraphics[width=0.48\textwidth]{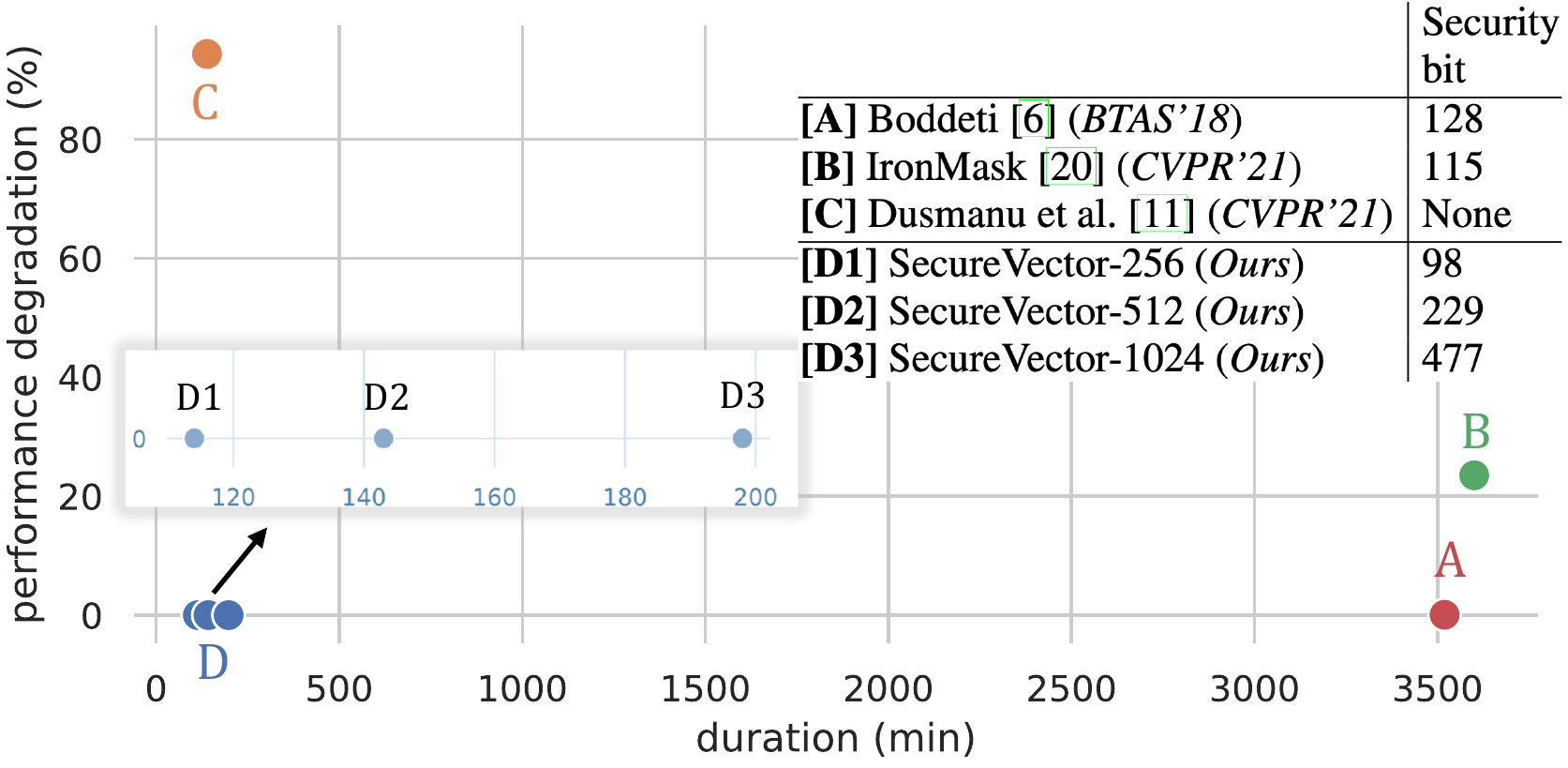}
  }
  \caption{
    \textbf{(a)}
    Given any vulnerable features $\bm x$, $\bm y$ extracted from sensitive data (\eg, facial images), SecureVector enrolls them by permuting into $\bm c_x$, $\bm c_y$ through random transformations $T_x, T_y$, which can be protected by a homomorphic encryption (HE) cryptosystem.
    Consequently, the nearest neighbor attack can no longer output faces from correct identities.
    For feature matching, the similarity between $\bm x, \bm y$ is calculated by $\bm c_x, \bm c_y$ and $T_z$, where $T_z$ is decrypted from summation of $T_x, T_y$ in the encryption domain.
    \textbf{(b)}
    We use features extracted from a released face recognition model~\cite{meng2021magface} and report performance degradation in terms of TAR@FAR=1e-4 on a large-scale benchmark called IJB-B~\cite{whitelam2017iarpa} on a machine with Intel(R) Xeon(R) CPU E5-2630 running at 2.20GHz and 256G RAM.
    Unlike the current methods suffering from significant performance degradations, heavy computational overheads, or no theoretical privacy guarantee, the proposed SecureVector can successfully lead to real-time and lossless feature matching along with much higher security bits.          
  }    
\end{figure}  

Visual retrieval techniques have been extensively deployed in real-world applications such as biometric (\eg, face, fingerprint and iris) recognition~\cite{meng2021magface, meng2021poseface, morampudi2020privacy, penn2014customisation, xu2021searching}, person/car re-identification~\cite{wang2017deep, hou2019interaction, gu2020appearance, meng2022basket} and image/video retrieval~\cite{radenovic2018fine, wray2021semantic, min2021convolutional}.
In such applications, visual sources are quantized as discriminative multi-dimensional feature vectors (\emph{a.k.a.} templates~\cite{patel2015cancelable, barni2015privacy, juels1999fuzzy, mohan2019significant, kim2021ironmask}) through two stages: the \emph{enrollment stage} to preprocess features along with corresponding labels for building a gallery, and the \emph{matching stage} to infer the classes of probes by calculating similarities of their features with those in the gallery.
Nonetheless, without being properly processed, these feature vectors are vulnerable to malicious attacks, posing severe threats to privacy.
For example, recent works~\cite{mai2018reconstruction, dong2021towards, meng2021learning} demonstrated that accessing face features enables a malicious attacker to reconstruct users' facial images, which are of sufficient quality to be identified by conventional face recognition systems.
Apart from the inversion attack, nearest neighbor attack may also inspect human identities as depicted in left panel of \cref{fig:intro}, where faces from the corresponding person can be deduced from the raw features $\bm x, \bm y$.
Similarly, sensitive facial attributes (\eg, ethnicity, age and gender) can be predicted from features with high accuracies~\cite{liu2015deep, dusmanu2021privacy, meng2021improving}.
The threats are not restricted to identification and recognition tasks, according to \citet{dusmanu2021privacy}.
In their trials, both deep features and hand-crafted local features such as SIFT~\cite{lowe2004distinctive} compromise privacy in the visual localization task.

The serious privacy issue has sparked a burgeoning academic field known as template protection~\cite{kim2021ironmask, dang2020fehash, mohan2019significant, patel2015cancelable, troncoso2013fully, boddeti2018secure, morampudi2020privacy, engelsma2020hers}, which sanitizes original feature vectors (templates) for the purpose of \textit{irreversibility}, \textit{renewability}, \textit{unlinkability} and \textit{performance retainability} as defined by ISE/IEC IS 24745~\cite{ISO24745}.
Irreversibility indicates that recovering original features from the sanitized data must be computationally impossible.
Renewability and unlinkability together contribute to the cancellable property. 
In particular, if one record gets leaked, it should be feasible to replace it with a renewed one that the adversary cannot link to the compromised record.
Performance retainability requires high verification accuracies during matching and thus guarantees high utility when protecting privacy, which is a critical concern by sensitive applications (\eg, biometric authentication).

Template protection has mostly evolved in two areas in recent years: transformation-based methods and cryptographic algorithms.
Early transformation approaches randomly assign variational codes for enrolled features and then fit mappings from features to corresponding codes~\cite{pandey2016deep, kumar2018face, dang2020fehash}.
However, the training-and-enrollment scheme scales poorly as re-fitting is ineluctable when adding or revoking features.
Several plug-in methods~\cite{mohan2019significant, kim2021ironmask, dusmanu2021privacy} make inspiring improvements by isolating each enrolled feature, but performance degradation cannot be circumvented due to the inevitable information loss during transformations.
Cryptographic algorithms are inherently plug-in approaches and usually utilize homomorphic encryption (HE)~\cite{gentry2012fully}, which enables arithmetic operations directly on ciphertexts.
Thus, feature distances can be computed in the encryption domain without compromising privacy.
Some successful attempts~\cite{upmanyu2010blind, barni2010privacy, penn2014customisation} have been reported for protecting binary feature vectors with partially homomorphic encryption (PHE).
Because PHE only supports one specific type of operation (\eg, addition for Paillier~\cite{paillier1999public} or multiplication for RSA~\cite{rivest1978method}), recent works~\cite{troncoso2013fully, boddeti2018secure, morampudi2020privacy, engelsma2020hers} resort to fully homomorphic encryption (FHE) for real-valued features, where addition and multiplication with unbounded depths are required.
FHE, on the other hand, consumes tremendous resources and therefore is impractical for large-scale applications.

Despite the progress, current plug-in methods still suffer from poor performances, heavy computational overheads, or limited security level as presented in \cref{fig:intro0}.
These drawbacks deprive their usages of real-world applications within utility-required, large-scale and sensitive scenarios.
To resolve these problems, this paper proposes a plug-in module called SecureVector for privacy-preserving, real-time, and lossless feature matching.
Unlike previous methods based on fully homomorphic encryption cryptosystems, we leverage on Paillier~\cite{paillier1999public}, a mature and efficient PHE allowing addition in the encrypted domain with the much higher security level and fewer resources consumption (\eg, computation, memory, and storage).
Given the real-valued features, the core idea of SecureVector is an enrollment scheme (the middle panel of \cref{fig:intro}) of random permutations controlled by sampled integers that can be protected by Paillier.
This process not only eliminates the need to process the original float values but also offers a large volume to reach high security bits, far exceeding existing state-of-the-arts as depicted in \cref{fig:intro0}.
During the matching phase, our design allows the addition of two permutations, which subsequently enables matching features $\bm x, \bm y$ by their permuted features $\bm c_x, \bm c_y$ and the addition of permutations $T_z$ in the right panel of \cref{fig:intro}.
That property waives the necessity for decrypting any individual record, preventing the recovery of original features.
The permutation scheme enables privacy-preserving and lossless feature matching.
However, there remain too many integers in each permutation to be processed by Paillier.
Therefore, we further propose a 4L-DEC converter that compresses all integers into a single value and vice versa while keeping the additional utility in the encryption domain.
By this means, Paillier only performs one time of the encryption for enrollment and one time of addition plus decryption for matching features.
As a result, enrolling a 512-dim vector in SecureVector only takes 0.59 ms and requires 0.59 Kb of memory, while the numbers for feature matching are 0.30 ms and 10.81 Kb.
Its efficiency is even better than \citet{dusmanu2021privacy}, a recent work that relies solely on linear algebra but suffers from poor matching results and no theoretical privacy guarantee.
Our contributions can be summarized as
\textbf{(a)}
To our best knowledge, SecureVector is the first to achieve privacy-preserving, real-time and lossless feature matching, whereas current state-of-the-arts can only fulfill parts of these properties.
The scheme of permutation, a 4L-DEC converter together with the existing Paillier cryptosystem are used for secure feature matching.
Rigorous theoretical proofs of security properties are provided as well.
\textbf{(b)}
We conduct extensive experiments on multiple tasks, including face recognition, person re-identification, image retrieval, and privacy analyses.
The security bit of the used SecureVector-512 is almost twice that of the best value of the baselines.
Aside from lossless feature matching, SecureVector outperforms the competitors in term of running time, memory consumption, and storage usage.
These results have demonstrated the effectiveness of our method.


\section{Related Works}

\noindent\textbf{Biometric Template Protection:}
To sanitize sensitive information, several biometric template protection efforts transform original features by non-invertible projections.
Cancellable biometrics~\cite{patel2015cancelable}, key-binding~\cite{barni2015privacy, juels1999fuzzy}, and affine subspaces~\cite{dusmanu2021privacy} are a few of the successful approaches that have showed promise in the field.
Recently, the main body of researches~\cite{sutcu2007protecting, kumar2018face, pandey2016deep, mai2020secureface, dang2020fehash, mohan2019significant, kim2021ironmask} projects biometric templates into random error-correcting-codes, which are further secured by hash functions.
For example, \citet{pandey2016deep} and \citet{kumar2018face} leverage deep neural networks to fit mappings between templates and maximum entropy binary codes.
To further accord with the embedding properties, \citet{mohan2019significant} encodes features by significant bits while FEHash~\cite{dang2020fehash} generates discriminative codes by iteratively separating features via hashing hyperplanes.
As binarization is the leading cause of performance drop, IronMask~\cite{kim2021ironmask} designs real-valued codes compatible with angular distance metric.
Despite these breakthroughs, performance degradation is still inevitable due to information loss during mapping.
This prevents these methods from being adopted in real-world scenarios, especially for sensitive fields like biometric authentication.

\noindent\textbf{Homomorphic Encryption for Biometrics:}
Homomorphic encryption (HE)~\cite{gentry2012fully} enables basic arithmetic operations directly on ciphertext and, as such, can be utilized to compute distances between two encrypted feature vectors.
Based on supported arithmetic operations, HE can be categorized into partially homomorphic, somewhat homomorphic, leveled fully homomorphic, and fully homomorphic encryption~\cite{armknecht2015guide}.
Partially homomorphic encryption (PHE), which only supports one type of operation (\eg, addition for Paillier~\cite{paillier1999public} and multiplication for  RSA~\cite{rivest1978method}), has several successful attempts for template protection.
For example, \citet{upmanyu2010blind} computes scores by RSA with repeated rounds of communications between the client and the database.
\citet{barni2010privacy} and \citet{penn2014customisation} utilize Paillier on fields of fingerprint and iris recognition, respectively.
However, these approaches are specific for binary templates and cannot be extended to real-valued features, where addition and multiplication are both required to compute distances between features.
A direct solution to the issue is the fully homomorphic encryption (FHE), the strongest notion of HE, which enables both operations with unbounded depths.
Although recent works~\cite{troncoso2013fully, boddeti2018secure, morampudi2020privacy, engelsma2020hers} reveal potentials of FHE, the heavy computational overheads still limit their usages in real-world circumstances.

\section{Methodology}
We first revisit Paillier in \cref{sec:method_paillier}.
\cref{sec:method_sv} details enrollment and matching stages, following which we present hyper-parameter studies and recommended settings in \cref{sec:select_param}.
In the end, we demonstrate how the SecureVector satisfies the ISO/IEC IS 24745~\cite{ISO24745} in \cref{sec:method_an}.

\subsection{Paillier Cryptosystem Revisited}\label{sec:method_paillier}

Paillier~\cite{paillier1999public} is an additive homomorphic cryptosystem based on the decisional composite residuosity assumption, which is believed to be intractable.
The main parameter of Paillier is the key size $S$ representing the security bit of the algorithm.
A big $S$ denotes a strong security level where $2^S$ possible attempts are required for an adversary to crack the system.
Paillier easily achieves a higher level (\eg, above 512) with far fewer resources than fully homomorphic encryption schemes (\eg, BFV~\cite{fan2012somewhat} and CKKS~\cite{cheon2017homomorphic}), which are normally limited to at most 128 or 192 security bits
\cref{sec:addexp_he} offers detailed comparisons of these schemes.

Given a key size $S$, Paillier first generates a pair of public ($pub$) and private ($pri$) keys.
Suppose  $m_1$, $m_2$ are two integers and their ciphertexts are $\tilde{m}_1$, $\tilde{m}_2$.
Paillier provides \textit{encryption} and \textit{decryption} functions with \textit{homomorphic addition} promised.
By denoting the encryption ($E$) and decryption ($D$) operations as $E_{pub}(m_1) = \tilde{m}_1$ and $D_{pri}(\tilde{m}_1) = m_1$ respectively, the property of homomorphic addition enables $D_{pri}(\tilde{m}_1 * \tilde{m}_2) = m_1 + m_2$, where $*$ is an operation in encryption domain.
Unlike \citet{xiao2012efficient} or other symmetric systems, Paillier is asymmetric as only public key is required for encryption, which further enhances its security.
Please refer to the original paper~\cite{paillier1999public} for details.

\subsection{SecureVector}\label{sec:method_sv}

Despite its theoretical superiority, Paillier cannot be directly employed to protect real-valued high-dimensional features because of three reasons:
(1) Only integer values are supported in the original algorithm.
(2) Matching features involves both addition and multiplication operations, while Paillier only supports the former one.
(3) Its linear growth rate on resource consumption concerning the size of encrypted/decrypted values and the number of operations in the encryption domain would be impractical to handle high-dimensional features (\eg, 512 is the typical length for a face recognition feature).
To bridge the gap between Paillier and the field of feature matching, we propose SecureVector with the enrollment and matching stages as designed in \cref{fig:method}.
We follow the common setting where features are $l_2$ normalized with cosine similarity as the matching metric.
With a few modifications presented in \cref{sec:ext_match}, our method can be easily extended for general features or other metrics.

\subsubsection{Enrollment in SecureVector}\label{sec:enrollment}

In theory, Paillier lacks the ability to directly handle real-valued feature vectors that would be further matched through addition and multiplication operations.
Instead, we resort to a random permutation scheme and a 4L-DEC encryptor as illustrated in \cref{fig:method}(a).
Raw features are sanitized via random signals composed of integers.
These signals are additive and thereby Paillier-friendly.
Then the 4L-DEC encryptor compresses signals into one integer with the additive property kept and encrypts outputs via Paillier.

\begin{figure*}[htb!]
  \centering
  \includegraphics[width=1.1\textwidth]{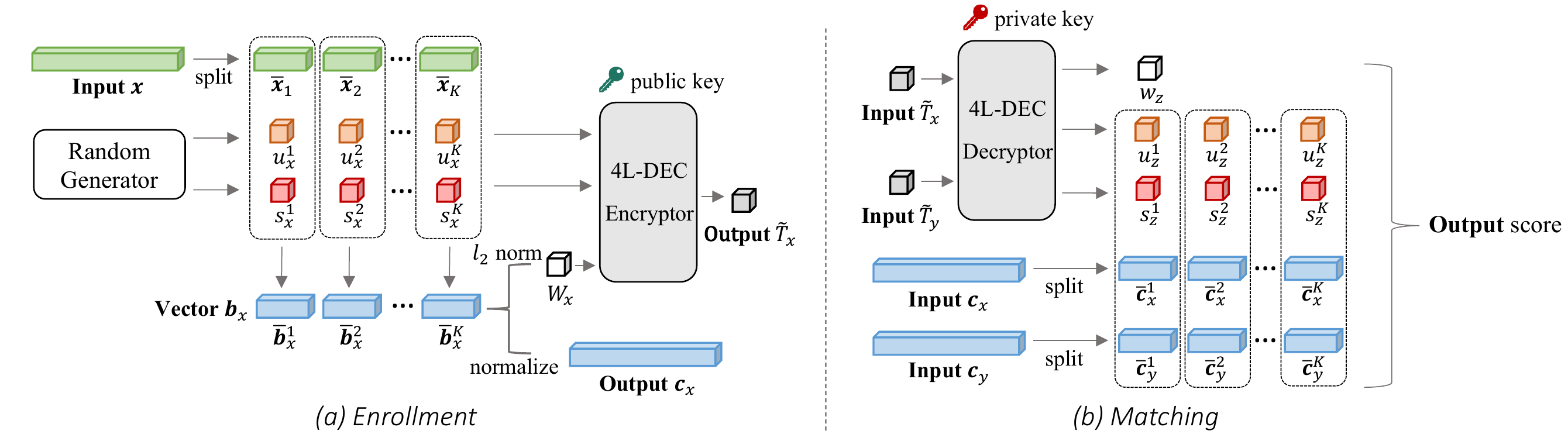}
  \caption{Workflow of SecureVector.
    (a) During enrollments, the input $\bm x$ is split into $\{\bar{\bm x}_i\}_{i=1}^{K}$ and permuted by random generated $\{u_x^i\}_{i=1}^K$, $\{s_x^i\}_{i=1}^K$ for value scaling and sign flipping.
    This permutation outputs $\bm b_x$, which is then normalized by its $l_2$ norm and produces a sanitized vector $\bm c_x$.
    After quantizing the $l_2$ norm as $W_x$, a 4L-DEC encryptor encrypts $W_x$ and $\{u_x^i,s_x^i\}_{i=1}^K$  into $\tilde{T}_x$, which together with $\bm c_x$ form the enrolled outputs.
    (b) Suppose $\{\bm c_x, \tilde{T}_x\}$ and $\{\bm c_y, \tilde{T}_y\}$ are records to match.
    With $\tilde{T}_x$  and $\tilde{T}_y$ as inputs,  the 4L-DEC decryptor generates $w_z$ and $\{s_z^i,u_z^i\}_{i=1}^K$ without compromising individual information.
    Then the score can be calculated by $\bm c_x, \bm c_y$, and the decrypted results.
  }\label{fig:method}
\end{figure*}

\noindent\textbf{Random permutation.}
Given a feature $\bm x = (x_1 \cdots, x_d)$, we permute it with the following steps.
First, we divide $\bm x$ into $K$ segments with equal length $\bar d$ (\ie, $K\bar d = d$) as $\bm x = (\bar {\bm x}_1, \cdots, \bar {\bm x}_K)$.
We then permute each $\bar {\bm x}_i$ into $\bar {\bm b}_x^i =  s_x^i \cdot e^{(u_x^i-L)/M}\cdot \bar {\bm x}_i$ where the sampled $u_x^i \in \{0,1, \cdots, 2L-1\}$ randomly scales values and $s_x^i\in \{-1, 1\}$ flips the sign of $\bar{\bm x}_i$.
The hyper-parameters $L, M$ are positive integers, controlling the scaling resolution.
As elaborated later, this design enables the matching between enrolled features.
In a nutshell, we permute $\bm x$ into $\bm c_x$ with:
\begin{equation}
  \small
  \label{eq:b2c}
  \begin{aligned}
    \bm c_x = \frac{ \bm b_x}{W_x}, \text{ where }  W_x = \|\bm b_x\|_2, \bm b_x &= (\bar {\bm b}_x^1, \bar {\bm b}_x^2, \cdots,\bar {\bm b}_x^K)
  \end{aligned}
\end{equation}

\noindent\textbf{4L-DEC Encryptor.}
Our next step is to encrypt $W_x$ and $\{u_x^i, s_x^i\}_{i=1}^K$ to prevent the recovery of original feature $\bm x$ from $\bm c_x$.
Directly using Paillier is computationally expensive as the cost grows linearly with the number of values to encrypt.
To address this issue, we design a 4L-DEC encryptor that compresses all values into a single 4L-bit integer while avoiding interferences during addition in the encryption domain.
The encryption consists of the following steps:

\quad\textbf{\textit{(1)}}
To encrypt all $\{s_x^i\}_{i=1}^K$ in a unified code, we randomly sample $v_x^i$ as an even number from $\{0, \cdots, 2L-1\}$ if $s_x^i=1$ or an odd one if $s_x^i=-1$.

\quad\textbf{\textit{(2)}}
We must quantize $W_x$ in advance as it's a float value.
As we have $\log W_x \in [-L/M, L/M)$ according to \cref{lemma:wf} in \cref{sec:ext_bound}, we equally divide the range into $2^{15}\cdot L^8$ parts, which is sufficiently large for high quantization precision.
In the end, we let $w_x = \lfloor \frac{\log W_x + L/M}{2L/M}\cdot 2^{15}\cdot L^8 \rfloor$.

\quad\textbf{\textit{(3)}}
We compress $\{u_x^i, v_x^i\}_{i=1}^K$ and $w_x$ into a single decimal number by concatenating them sequentially in the 4L-bit number system:
\begin{equation}
  \small
  \label{eq:uvw2c}
  T_x = \sum_{i=1}^{K} u_x^i \cdot (4L)^{i-1} + \sum_{i=1}^{K} v_x^i \cdot (4L)^{i+K-1} + w_x \cdot (4L)^{2K}.
\end{equation}

\quad\textbf{\textit{(4)}}
In the end, we encrypt the number as $\tilde T_x = E_{pub}(T_x)$ via Paillier.
It's worth noting that we have compressed $2K+1$ numbers into one integer.
This design highly speeds up the whole algorithm as the operations of encryption, addition, and decryption in Paillier are all reduced $2K+1$ times.

\subsubsection{Properties of the Designed Permutation}

This part analyzes the designed permutation scheme.
Assume that there are two features $\bm x, \bm y$, and that the enrolled results are $\{\bm c_x, \tilde T_x\}, \{\bm c_y, \tilde T_y\}$ respectively.
Suppose $T_x$ are generated by $\{u_x^i, v_x^i\}_{i=1}^K$ and $w_x$, and $T_y$ are generated by $\{u_y^i, v_y^i\}_{i=1}^K$ and $w_y$.
With Paillier of key size $S$, we can compute the sum of $\{T_x, T_y\}$ by $T_z = D_{pri}(\tilde T_x * \tilde T_y)$.
We start with two properties of the adding process.

\begin{lemma}
  \label{lemma:carry_prop}
  There is no carry-propagation for the first 2K digits during the addition of $T_x, T_y$ in the 4L-bit number system.
\end{lemma}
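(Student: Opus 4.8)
The plan is to read off the base-$4L$ digits of $T_x$ and $T_y$ directly from \cref{eq:uvw2c} and then bound the column sums that arise when the two numbers are added. First I would observe that, since $w_x$ is multiplied by $(4L)^{2K}$, the term $w_x\cdot(4L)^{2K}$ contributes only to positions $2K$ and above. Consequently the digits of $T_x$ in positions $0$ through $2K-1$ are exactly $u_x^1,\dots,u_x^K$ (positions $0$ to $K-1$) followed by $v_x^1,\dots,v_x^K$ (positions $K$ to $2K-1$), and the same holds for $T_y$. This bookkeeping cleanly isolates the $u,v$ block from the $w$ block.

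Next I would invoke the sampling ranges fixed in the enrollment stage: by construction $u_x^i\in\{0,\dots,2L-1\}$ and $v_x^i\in\{0,\dots,2L-1\}$, so every one of the first $2K$ digits of both $T_x$ and $T_y$ lies in $\{0,\dots,2L-1\}$. Adding $T_x$ and $T_y$ column by column in the base-$4L$ system, the sum at each of these $2K$ positions is at most $(2L-1)+(2L-1)=4L-2$, which is strictly less than the radix $4L$. Hence no column among the first $2K$ generates a carry into the next position. Because position $0$ is the least significant digit and therefore receives no incoming carry, and because each subsequent column produces no carry regardless of what might enter it, no carry propagates anywhere within the first $2K$ digits.

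Finally I would verify that the high-order part cannot disturb this conclusion: any carries generated when adding the $w$-portions live at position $2K$ and above, and such carries can only propagate upward, never downward into positions $0,\dots,2K-1$. This closes the argument. The only points requiring care are the digit-position accounting that separates the $u,v$ entries from the $w$ block and the explicit use of the $2L-1$ bound to force each column sum strictly below the base; once these are stated, the claim follows immediately, so I anticipate no substantive obstacle.
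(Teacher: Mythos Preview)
Your proposal is correct and takes essentially the same approach as the paper: both arguments identify the first $2K$ base-$4L$ digits of $T_x,T_y$ as the $u$'s and $v$'s via \cref{eq:uvw2c} and then use the bound $u_x^i+u_y^i,\,v_x^i+v_y^i\le 2(2L-1)=4L-2<4L$ to rule out any carry in those positions. Your additional remark that carries from the $w$-block can only propagate upward is a harmless extra observation that the paper omits.
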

\begin{proof}
  As $\{u_x^i, v_x^i, u_y^i, v_y^i\}_{i=1}^K$ are sampled from $\{0, 1, \cdots, 2L-1\}$,  then $0 \leq (u_x^i+u_y^i), (v_x^i + v_y^i) \leq 4L-2 < 4L-1$ is always true.
  We can further have
    $T_x+T_y = \sum_{i=1}^K (u_x^i + u_y^i)\cdot (4L)^{i-1} +  \sum_{i=1}^K (v_x^i+v_y^i) \cdot (4L)^{K+i-1} + (w_x+w_y) \cdot (4L)^{2K}$ according to \cref{eq:uvw2c}.
  As a result, no carry-propagation occurs for the first 2K digits in 4L-bit number system, which corresponds to  $\sum_{i=1}^K (u_x^i + u_y^i)\cdot (4L)^{i-1} +  \sum_{i=1}^K (v_x^i+v_y^i) \cdot (4L)^{K+i-1}$.
\end{proof}

\begin{lemma}
  \label{lemma:sum_bound}
  In the 4L-bit number system, the addition of $T_x, T_y$ can be expressed with $2K+8$ digits.
  In other words, $T_x + T_y < (4L)^{2K+9}$ always holds.
\end{lemma}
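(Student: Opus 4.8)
The plan is to read off the exact positional structure of $T_x + T_y$ in base $4L$ from the decomposition already established in \cref{lemma:carry_prop}, and then to bound the contribution of each block (the $u$-block, the $v$-block, and the $w$-block) separately. The decisive fact I would isolate first is the arithmetic identity $2^{15}\cdot L^8 = \tfrac{1}{2}(4L)^8$, so that the quantized norm $w_x$ is guaranteed to occupy at most eight digits in the $4L$-bit system and, crucially, even the \emph{sum} $w_x + w_y$ cannot spill over into a ninth digit.

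First I would recall from \cref{lemma:carry_prop} that
\[
T_x + T_y = \sum_{i=1}^K (u_x^i + u_y^i)(4L)^{i-1} + \sum_{i=1}^K (v_x^i+v_y^i)(4L)^{K+i-1} + (w_x+w_y)(4L)^{2K},
\]
and that the first $2K$ digits carry no propagation, so that block is a genuine $2K$-digit number in base $4L$ and is therefore bounded by $(4L)^{2K}-1 < (4L)^{2K}$. In particular, no carry leaks upward from digit $2K-1$ into the region occupied by the $w$-term.

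Next I would control the top block. By construction $w_x = \lfloor \frac{\log W_x + L/M}{2L/M}\cdot 2^{15}L^8 \rfloor$, and by \cref{lemma:wf} the value $\log W_x$ lies in $[-L/M, L/M)$, so the quantity inside the floor lies in $[0,1)$, whence $0 \le w_x \le 2^{15}L^8 - 1$. The same bound holds for $w_y$, so $w_x + w_y \le 2^{16}L^8 - 2 = (4L)^8 - 2 < (4L)^8$. This is the heart of the argument: the exact matching of the quantization granularity $2^{15}L^8$ to half of $(4L)^8$ is precisely what prevents an overflow when the two quantized norms are added together.

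Finally I would combine the two bounds. Since the low block is at most $(4L)^{2K}-1$ and $w_x + w_y \le (4L)^8 - 2$,
\[
T_x + T_y \le (4L)^{2K} - 1 + \big((4L)^8 - 2\big)(4L)^{2K} < (4L)^{2K}\cdot (4L)^8 = (4L)^{2K+8} < (4L)^{2K+9},
\]
so $T_x + T_y$ in fact fits in $2K+8$ digits and a fortiori satisfies the stated bound. The only place demanding genuine care is the second step: one must verify, via \cref{lemma:wf}, that the floor keeps $w_x$ strictly below $2^{15}L^8$, and then check the identity $2\cdot 2^{15}L^8 = (4L)^8$, since everything else is bookkeeping on digit positions already licensed by \cref{lemma:carry_prop}.
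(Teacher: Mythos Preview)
Your proof is correct and follows essentially the same approach as the paper: split $T_x+T_y$ into the $u/v$-block (bounded via \cref{lemma:carry_prop}) and the $w$-block, then use $w_x+w_y \le 2^{16}L^8 - 2 < (4L)^8$ to control the top. The paper reaches the same conclusion by rewriting $(4L)^8 - 1$ as $(4L-1)\sum_{i=1}^8 (4L)^{i-1}$ and summing digit-by-digit, whereas you combine the blocks directly; your final chain $T_x+T_y < (4L)^{2K+8}$ is in fact the sharp bound matching the ``$2K+8$ digits'' claim in the first sentence of the lemma.
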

\begin{proof}
  \cref{lemma:carry_prop} presents $\leq (u_x^i+u_y^i), (v_x^i + v_y^i) < 4L-1$. 
  Similarly, $w_x, w_y \in \{0, 1, \cdots,  2^{15}L^8-1\}$ further yeilds $w_x + w_y \leq 2^{16}L^8 -2 < (4L)^8-1 = (4L-1)\cdot\left(\sum_{i=1}^8(4L)^{i-1}\right)$. 
  Then we have
  \begin{equation*}
    \small
    \label{eq:sum_bound}
    \begin{split}
      T_x+T_y & = \sum_{i=1}^K (u_x^i + u_y^i)\cdot (4L)^{i-1} +  \sum_{i=1}^K (v_x^i+v_y^i) \cdot (4L)^{K+i-1} + (w_x+w_y) \cdot (4L)^{2K}    \\
      & < (4L-1)\cdot\sum_{i=1}^{2K} (4L)^{i-1} + (4L-1)\cdot\left(\sum_{i=1}^8(4L)^{i-1}\right) \cdot (4L)^{2K}  =  \sum_{i=1}^{2K+8}(4L-1)\cdot (4L)^{i-1} 
    \end{split}
  \end{equation*}
  The upper bound is $\sum_{i=1}^{2K+8}(4L-1)\cdot (4L)^{i-1} = (4L)^{2K+9} - 1$, which completes the proof.
\end{proof}

Because Paillier's capacity is $2^S$, we must prevent numerical overflow in order to successfully encrypt $T_x, T_y$, add them in the encryption domain and decrypt the sum.
\cref{lemma:keysize} circumvents the risk.
\begin{lemma}
  \label{lemma:keysize}
  Paillier with key size $S\geq(2K+9)\log_2(4L)$ is sufficient to avoid numerical overflow.
\end{lemma}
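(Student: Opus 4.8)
The plan is to reduce the claim to the numerical bound already established in \cref{lemma:sum_bound} followed by a single application of the logarithm. The first step is to make precise what ``numerical overflow'' means for Paillier. Homomorphic addition satisfies $D_{pri}(\tilde T_x * \tilde T_y) = T_x + T_y$ only when the plaintext sum stays strictly within Paillier's usable plaintext range, which the excerpt identifies with the capacity $2^S$. Hence the enrollment-plus-matching pipeline is overflow-free precisely when $T_x + T_y < 2^S$, and I would aim to verify this inequality uniformly over every admissible pair of enrolled integers $T_x, T_y$.

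The second step is to bound the left-hand side independently of the particular samples. By \cref{lemma:sum_bound}, every such sum obeys $T_x + T_y < (4L)^{2K+9}$, so it suffices to guarantee $(4L)^{2K+9} \le 2^S$. Taking $\log_2$ of both sides converts this into $(2K+9)\log_2(4L) \le S$, which is exactly the stated hypothesis $S \ge (2K+9)\log_2(4L)$. Therefore, under that hypothesis the sum never reaches the capacity and decryption recovers the exact integer $T_x + T_y$ rather than a residue modulo the plaintext modulus, so the first $2K$ digits (and the quantized norm block) can be read off faithfully as used in the matching stage.

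Finally I would observe that bounding the sum automatically controls the two summands: since all of $\{u^i, v^i\}$ and $w$ are nonnegative, we have $T_x, T_y \le T_x + T_y < 2^S$, so each is a valid Paillier plaintext and the encryptions $\tilde T_x = E_{pub}(T_x)$ and $\tilde T_y = E_{pub}(T_y)$ are well defined. I do not expect a genuine obstacle here, as the substantive combinatorial estimate was already carried out in \cref{lemma:sum_bound}; this lemma is essentially its corollary. The only points requiring care are the identification of Paillier's usable range with $2^S$ and the remark that constraining the sum is the binding condition, since the decrypted quantity $T_x + T_y$ dominates both individual plaintexts.
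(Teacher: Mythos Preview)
Your proposal is correct and mirrors the paper's own argument: invoke \cref{lemma:sum_bound} to get $T_x+T_y<(4L)^{2K+9}$, compare with the capacity via $2^S\ge (4L)^{2K+9}$ under the hypothesis, and note that the nonnegative summands $T_x,T_y$ are then individually below $2^S$. The paper's proof is a one-line compression of exactly this chain of inequalities.
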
  
\begin{proof}
  If $S\geq(2K+9)\log_2(4L)$, we get $2^s \geq 4L^{2K+9} > 4L^{2K+9}-1> T_x+T_y \geq T_x, T_y$, which means the capacity of Paillier is greater than $T_x, T_y$   as well as their sum.
\end{proof}

Previous analyses guarantee the accurate output of $T_z=D_{pri}(\tilde T_x *\tilde T_y) = T_x+T_y$.
Then we extract the corresponding bits $\{u_z^i, v_z^i\}_{i=1}^K, w_z$ from $T_z$ by
\begin{equation}
  \small
  \label{eq:get_uvw_z}
  u_z^i  = \lfloor T_z/(4L)^{i-1}\rfloor \mod 4L, \quad v_z^i  = \lfloor T_z/(4L)^{K+i-1}\rfloor \mod 4L, \quad w_z  = \lfloor T_z/(4L)^{2K}\rfloor.
\end{equation}
We next show that the designed permutation scheme is additive.

\begin{lemma}
  \label{lemma:uvw}
  For $\{u_z^i, v_z^i\}_{i=1}^K, w_z$, the following equations hold:
  \begin{equation}
    \small
    \label{eq:proof_uvw_z}
      u_z^i = u_x^i + u_y^i, \quad v_z^i = v_x^i + v_y^i,\quad  w_z  = w_x + w_y,\qquad \forall i \in \{1, 2,\cdots K\}
  \end{equation}
\end{lemma}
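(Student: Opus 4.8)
The plan is to exploit the clean additive form of $T_z$ that \cref{lemma:carry_prop} already gives us, and then read \cref{eq:get_uvw_z} as nothing more than standard base-$(4L)$ digit extraction. Since \cref{lemma:keysize} guarantees that Paillier returns the exact sum, so that $T_z = T_x + T_y$, I may substitute
\begin{equation*}
  T_z = \sum_{i=1}^K (u_x^i + u_y^i)\cdot (4L)^{i-1} + \sum_{i=1}^K (v_x^i+v_y^i) \cdot (4L)^{K+i-1} + (w_x+w_y) \cdot (4L)^{2K}
\end{equation*}
directly from \cref{lemma:carry_prop}. The entire claim then reduces to showing that the rules in \cref{eq:get_uvw_z} peel off exactly the combined coefficients displayed above.

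First I would record that every combined coefficient occupying the first $2K$ positions is a legitimate digit, i.e.\ lies in $\{0,1,\dots,4L-1\}$: this is immediate from the bound $0 \le (u_x^i + u_y^i), (v_x^i + v_y^i) \le 4L-2 < 4L$ already derived inside \cref{lemma:carry_prop}. For the $u_z^i$ rule I would then bound the strictly lower-order terms, $\sum_{j=1}^{i-1}(u_x^j + u_y^j)(4L)^{j-1} \le (4L-1)\sum_{j=1}^{i-1}(4L)^{j-1} = (4L)^{i-1}-1 < (4L)^{i-1}$, so that after dividing $T_z$ by $(4L)^{i-1}$ their total contribution is strictly below $1$ and is annihilated by the floor. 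The remaining integer, reduced modulo $4L$, is then exactly the position-$(i-1)$ digit $u_x^i + u_y^i$, and the reduction is exact because that digit is $<4L$. The same computation with the offset $K+i-1$ yields $v_z^i = v_x^i + v_y^i$.

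For $w_z$ I would observe that the whole first-$2K$-digit block is bounded by $(4L-1)\sum_{i=1}^{2K}(4L)^{i-1} = (4L)^{2K}-1 < (4L)^{2K}$, so $\lfloor T_z/(4L)^{2K}\rfloor$ discards it completely and returns $w_x + w_y$; here no final mod is needed, which matches \cref{eq:get_uvw_z}.

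I do not expect a genuine obstacle, since the statement is essentially the uniqueness of base-$(4L)$ representation once the coefficients are certified to be valid digits. The only point demanding care is the interplay of the floor and the mod in each extraction: one must check simultaneously that the lower-order terms leave a fractional part strictly below $1$ after division (so the floor is unaffected) and that the targeted coefficient never reaches the radix $4L$ (so the mod is exact). Both facts rest entirely on the no-carry-propagation bound of \cref{lemma:carry_prop}, so the work is careful bookkeeping rather than a new idea.
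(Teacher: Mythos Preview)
Your proposal is correct and follows essentially the same route as the paper's own proof: both substitute the carry-free expansion of $T_z$ from \cref{lemma:carry_prop}, bound the lower-order block so the floor annihilates it, and use the digit bound $u_x^i+u_y^i,\,v_x^i+v_y^i<4L$ so the final reduction modulo $4L$ is exact. Your write-up is arguably tidier (you make the geometric-series bound $(4L)^{i-1}-1$ explicit), but there is no substantive difference in strategy.
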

\begin{proof}
  Let's start by proving that $w_z = w_x + w_y$.
  We have
  $w_z = \lfloor T_z/(4L)^{2K}\rfloor = \lfloor \left(\sum_{j=1}^K (u_x^j + u_y^j)\cdot (4L)^{j-1}+ \sum_{j=1}^K (v_x^j+v_y^j) \cdot (4L)^{K+j-1} + (w_x+w_y) \cdot (4L)^{2K}\right)/(4L)^{2K} \rfloor  = \lfloor \left(\sum_{j=1}^{K} (u_x^j + u_y^j)\cdot (4L)^{j-i} + \sum_{j=1}^K (v_x^j+v_y^j) \cdot (4L)^{K+j-1}\right)/(4L)^{2K}  \rfloor + (w_x+w_y)$ as $T_z = T_x+T_y$.
  It's simple to deduce $\sum_{j=1}^{K} (u_x^j + u_y^j)\cdot (4L)^{j-i} + \sum_{j=1}^K (v_x^j+v_y^j) \cdot (4L)^{K+j-1} < (4L)^{2K}$ from \cref{lemma:carry_prop}.
  After rounding down, the ratio in the first term equals 0, resulting in $w_z = w_x+w_y$.
  
  Steps below can prove that $u_z^i = u_x^i + u_y^i$ (similarly we can get $v_z^i = v_x^i + v_y^i$):
  \begin{equation*}
    \small
    \begin{aligned}
      u_z^i =& \lfloor T_z/(4L)^{i-1}\rfloor \mod 4L \\
      =&
      \lfloor \left(\sum_{j=1}^K (u_x^j + u_y^j)\cdot (4L)^{j-1}+ \sum_{j=1}^K (v_x^j+v_y^j) \cdot (4L)^{K+j-1} \right.\left. + (w_x+w_y) \cdot (4L)^{2K}\right)/(4L)^{i-1} \rfloor
      \mod 4L \\
      =& \left(\lfloor (\sum_{j=0}^{i-1} (u_x^j + u_y^j)\cdot (4L)^{j})/(4L)^i \rfloor + 
        \sum_{j=i}^K (u_x^j + u_y^j)\cdot (4L)^{j-i}+ \sum_{j=1}^K (v_x^j+v_y^j) \cdot (4L)^{K+j-i} \right.\\
      & \left.+ (w_x+w_y) \cdot (4L)^{2K-i+1}\right) \mod 4L = (0+u_x^i + u_y^i + 4L \cdot \psi_u)\mod 4L 
    \end{aligned}
  \end{equation*}
  Here $\psi_v=\sum_{j=i+1}^K (v_x^j + v_y^j)\cdot (4L)^{j-i-1}+(w_x+w_y) \cdot (4L)^{K-i}$ is an integer, which makes $4L \cdot \psi_u\mod 4L = 0$.
  Moreover, we have $(u_x^i+u_y^i) \mod 4L = u_x^i+u_y^i$ because the sum will not exceed $4L-2$.
  In the end, we finish the proof by $u_z^i = (0+u_x^i + u_y^i + 4L \cdot \psi_u)\mod 4L  = u_x^i+u_y^i$.
\end{proof}

\begin{lemma}
  \label{lemma:s}
  For $i = 1,2, \cdots, K$, $s_z^i = s_x^i\cdot s_y^i$ always holds if $s_z^i$ is defined as 
  \begin{equation}
  \small
  \label{eq:2}
  s_z^i  = \left \{
    \begin{aligned}
      1  & \text{\quad    if $v_z^i $ is even}, \\
      -1 & \text{\quad    if $v_z^i $ is odd}. \\
    \end{aligned}
  \right.
\end{equation}
\end{lemma}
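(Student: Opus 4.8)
The plan is to reduce the claim to a parity computation, leaning on \cref{lemma:uvw}, which already guarantees the additive identity $v_z^i = v_x^i + v_y^i$ in each coordinate. Once that identity is in hand, the sign $s_z^i$ is determined purely by the parity of $v_z^i$ through the defining rule \cref{eq:2}, so the whole statement collapses to tracking how the parities of $v_x^i$ and $v_y^i$ combine under addition.

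First I would recall the encoding convention introduced in step \textbf{\textit{(1)}} of the 4L-DEC encryptor: $v_x^i$ is sampled even exactly when $s_x^i = 1$ and odd exactly when $s_x^i = -1$, and the same rule ties $v_y^i$ to $s_y^i$. This establishes a tight correspondence between each sign $\pm 1$ and the parity of its associated $v$ value, which is the engine of the argument.

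Next I would invoke \cref{lemma:uvw} to write $v_z^i = v_x^i + v_y^i$, and then carry out a case analysis on the pair of signs $(s_x^i, s_y^i)$. When the two signs agree (both $1$ or both $-1$), the two $v$ values share the same parity, so their sum $v_z^i$ is even and \cref{eq:2} returns $s_z^i = 1 = s_x^i \cdot s_y^i$. When the two signs disagree, one $v$ value is even and the other odd, so $v_z^i$ is odd and \cref{eq:2} returns $s_z^i = -1 = s_x^i \cdot s_y^i$. In both regimes the parity of the sum mirrors the product of the signs, which is exactly the claimed identity for every $i \in \{1, 2, \cdots, K\}$.

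The computation is entirely routine, so there is no genuinely hard step here; the only point requiring care is that the decoded value $v_z^i$ really equals $v_x^i + v_y^i$, i.e.\ that no carry contaminates the $v$ digits during the encrypted addition. That is precisely what \cref{lemma:carry_prop} and \cref{lemma:uvw} already secure, so I would cite them rather than re-derive the decoding, keeping the remaining argument a clean parity bookkeeping over the four sign combinations.
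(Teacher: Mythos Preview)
Your proposal is correct and matches the paper's own proof almost exactly: both invoke \cref{lemma:uvw} to obtain $v_z^i = v_x^i + v_y^i$ and then reduce the claim to a parity case analysis linking the even/odd encoding of $v$ to the sign product $s_x^i \cdot s_y^i$. The only cosmetic difference is that the paper splits cases on the parity of $v_z^i$ while you split on the pair $(s_x^i, s_y^i)$, but these are the same four cases grouped identically.
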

\begin{proof}
  \cref{lemma:uvw} presents $v_z^i = v_x^i + v_y^i$.
  If $v_z^i$ is even, then $v_x^i, v_y^i$ are both even or odd, implying that $s_x^i, s_y^i$ are both 1 or -1.
  Then we have $s_x^i \cdot s_y^i = 1$. 
  If $v_z^i$ is odd, then $v_x^i, v_y^i$ contrains one even and one odd integer.
  As a result, $s_x^i \cdot s_y^i = -1\cdot 1 = -1$ is true. 
  $s_z^i = s_x^i \cdot s_y^i$ is correct in both cases.
\end{proof}

\subsubsection{Feature Matching in SecureVector}
\cref{fig:method}(b) describes the feature matching process.
With two features $\bm x, \bm y$ and their enrolled results are $\{\bm c_x, \tilde T_x\}, \{\bm c_y, \tilde T_y\}$, 4L-DEC decryptor takes $\tilde T_x, \tilde T_y$ as the input and produces $\{u_z^i, s_z^i\}_{i=1}^K$ and $w_z$.
Then the score can be calculated by $\bm c_x, \bm c_y$ and the outputs of the 4L-DEC decryptor.

\noindent\textbf{4L-DEC Decryptor.}
If $\tilde T_x, \tilde T_y$ are decrypted directly, an adversary can potentially recover the original features if $T_x, T_y$ are exposed.
Instead, the proposed 4L-DEC decryptor only decrypts the summations (\ie, $T_z = D_{pri}(\tilde T_x *\tilde T_y)$), and then decompresses $T_z$ into $\{u_z^i, v_z^i\}_{i=1}^K$ and $w_z$.
These steps bypass the aforementioned privacy risk as proved in \cref{lemma:attack} in appendix and are detailed as follows:

\quad\textbf{\textit{(1)}}
With the private key, we decrypt the summation $T_z = D_{pri}(\tilde T_x *\tilde T_y)$.

\quad\textbf{\textit{(2)}}
$\{u_z^i, v_z^i\}_{i=1}^K$ and  $w_z$ are generated from $T_z$ by \cref{eq:get_uvw_z}.

\quad\textbf{\textit{(3)}}
For $i=1,2,\cdots, K$,  we set $s_z^i$ to be 1 if $v_z^i $ is even and -1 otherwise, as stated in \cref{eq:2}. 

\noindent\textbf{Matching score.}
In the end, the score is calculated by the following lemma:

\begin{lemma}
  \label{lemma:score}
  The similarity score of features $\bm x, \bm y$ can be calculated by $\{\bm c_x, \bm c_y\}$ and generated $\{u_z^i, s_z^i\}_{i=1}^K, w_z$ with
  $\bm x^T\bm y=\sum_{i=1}^K  \frac{e^{\frac{w_z - 2^{15}L^8}{2^{14}L^7M}}}{s_z^i\cdot e^{(u_z^i-2L)/M}} \cdot (\bar {\bm c}_x^i)^T\bar {\bm c}_y^i$.
\end{lemma}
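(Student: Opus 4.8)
The plan is to invert the enrollment map block-by-block and then reassemble the inner product $\bm x^T\bm y$, substituting the additive quantities that the decryptor recovers. First I would observe that since $\bm x$ and $\bm y$ are split into $K$ length-$\bar d$ segments along the same partition, the inner product decomposes as $\bm x^T\bm y = \sum_{i=1}^K (\bar{\bm x}_i)^T\bar{\bm y}_i$. From \cref{eq:b2c} together with the definition $\bar{\bm b}_x^i = s_x^i\, e^{(u_x^i-L)/M}\,\bar{\bm x}_i$, each sanitized block satisfies $\bar{\bm c}_x^i = s_x^i\, e^{(u_x^i-L)/M}\,\bar{\bm x}_i / W_x$, which I invert to $\bar{\bm x}_i = \frac{W_x}{s_x^i\, e^{(u_x^i-L)/M}}\,\bar{\bm c}_x^i$, and analogously $\bar{\bm y}_i = \frac{W_y}{s_y^i\, e^{(u_y^i-L)/M}}\,\bar{\bm c}_y^i$.

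Substituting these into the block-wise inner product produces, in each term, a scalar coefficient $\frac{W_x W_y}{s_x^i s_y^i\, e^{(u_x^i+u_y^i-2L)/M}}$ multiplying $(\bar{\bm c}_x^i)^T\bar{\bm c}_y^i$. The key step is to replace the individual-feature quantities, which the matcher never sees in isolation, by the decrypted aggregates: \cref{lemma:uvw} gives $u_z^i = u_x^i+u_y^i$, so $e^{(u_x^i+u_y^i-2L)/M} = e^{(u_z^i-2L)/M}$, while \cref{lemma:s} gives $s_z^i = s_x^i s_y^i$. This collapses the coefficient to $\frac{W_x W_y}{s_z^i\, e^{(u_z^i-2L)/M}}$, which already matches the denominator of the claimed formula.

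It then remains to express $W_x W_y$ through $w_z$. Inverting the quantization $w_x = \lfloor \frac{\log W_x + L/M}{2L/M}\cdot 2^{15}L^8\rfloor$ gives $\log W_x = \frac{w_x - 2^{14}L^8}{2^{14}L^7 M}$; adding the analogous identity for $W_y$ and invoking $w_z = w_x+w_y$ from \cref{lemma:uvw} yields $\log(W_x W_y) = \frac{w_z - 2^{15}L^8}{2^{14}L^7 M}$, hence $W_x W_y = e^{(w_z-2^{15}L^8)/(2^{14}L^7 M)}$. Inserting this as the numerator of every term completes $\bm x^T\bm y = \sum_{i=1}^K \frac{e^{(w_z-2^{15}L^8)/(2^{14}L^7 M)}}{s_z^i\, e^{(u_z^i-2L)/M}}(\bar{\bm c}_x^i)^T\bar{\bm c}_y^i$. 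I expect the main obstacle to be precisely this quantization step: the floor in the definition of $w_x$ makes the recovery of $\log W_x$ only approximate, so the stated equality is exact only in the idealized (infinitely fine) quantization limit. I would therefore treat the floor as exact for the algebraic identity and separately argue that the chosen resolution $2^{15}L^8$, guaranteed to cover the range $\log W_x\in[-L/M,L/M)$ by \cref{lemma:wf}, makes the induced error negligible, consistent with the paper's ``lossless'' claim.
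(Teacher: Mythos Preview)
Your proposal is correct and follows essentially the same approach as the paper's proof: block-wise decomposition of $\bm x^T\bm y$, inversion of \cref{eq:b2c} to express $\bar{\bm x}_i,\bar{\bm y}_i$ in terms of $\bar{\bm c}_x^i,\bar{\bm c}_y^i$, then replacing $u_x^i+u_y^i$, $s_x^is_y^i$, and $w_x+w_y$ by $u_z^i$, $s_z^i$, $w_z$ via \cref{lemma:uvw} and \cref{lemma:s}. Your explicit acknowledgement that the floor in the quantization of $W_x$ makes the identity only approximate is exactly how the paper handles it as well (it writes $\approx$ at that step).
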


\begin{proof}
  Because we have $w_x = \lfloor \frac{\log W_x + L/M}{2L/M}\cdot 2^{15}\cdot L^8 \rfloor$, it's easy to recover the $W_x$ from $w_x$ by $\log W_x \approx \frac{w_x}{2^{14}L^7M} - \frac{L}{M} = \frac{w_x - 2^{14}L^8}{2^{14}L^7M}$.
  Then the cosine similarity $\bm x^T\bm y = \sum_{i=1}^K \bar {\bm x}_i^T\bar {\bm y}_i$ is
  \begin{equation*}
    \small
    \label{eq:recover}
    \begin{aligned}
      \bm x^T \bm y & = \sum_{i=1}^K \frac{(\bar {\bm b}_x^i)^T}{s_x^i \cdot e^{(u_x^i-L)/M}}\frac{\bar {\bm b}_y^i}{s_y^i \cdot e^{(u_y^i-L)/M}}  = \sum_{i=1}^K \frac{W_x \cdot (\bar {\bm c}_x^i)^T}{s_x^i \cdot e^{(u_x^i-L)/M}}\frac{W_y\cdot \bar {\bm c}_y^i}{s_y^i \cdot e^{(u_y^i-L)/M}}
      & \text{\ttfamily by \cref{eq:b2c}} \\
      & = \sum_{i=1}^K  \frac{W_x\cdot W_y}{s_x^is_y^i\cdot e^{(u_x^i+u_y^i-2L)/M}}\cdot (\bar {\bm c}_x^i)^T\bar {\bm c}_y^i  = \sum_{i=1}^K  \frac{e^{(\log W_x + \log W_y)}}{s_x^is_y^i\cdot e^{(u_z^i-2L)/M}} \cdot (\bar {\bm c}_x^i)^T\bar {\bm c}_y^i & \text{\ttfamily by \cref{lemma:uvw}} \\
      & = \sum_{i=1}^K  \frac{e^{(\log W_x + \log W_y)}}{s_z^i\cdot e^{(u_z^i-2L)/M}} \cdot (\bar {\bm c}_x^i)^T\bar {\bm c}_y^i  \approx \sum_{i=1}^K  \frac{e^{\frac{w_x - 2^{14}L^8}{2^{14}L^7M} + \frac{w_y - 2^{14}L^8}{2^{14}L^7M}}}{s_z^i\cdot e^{(u_z^i-2L)/M}} \cdot (\bar {\bm c}_x^i)^T\bar {\bm c}_y^i
      & \text{\ttfamily by \cref{lemma:s}} \\
      & = \sum_{i=1}^K  \frac{e^{\frac{w_x+w_y - 2^{15}L^8}{2^{14}L^7M}}}{s_z^i\cdot e^{(u_z^i-2L)/M}} \cdot (\bar {\bm c}_x^i)^T\bar {\bm c}_y^i = \sum_{i=1}^K  \frac{e^{\frac{w_z - 2^{15}L^8}{2^{14}L^7M}}}{s_z^i\cdot e^{(u_z^i-2L)/M}} \cdot (\bar {\bm c}_x^i)^T\bar {\bm c}_y^i & \text{\ttfamily by \cref{lemma:uvw}}
    \end{aligned}
  \end{equation*}
\end{proof}

\subsection{Selections of Hyper-parameters}
\label{sec:select_param}

Central to the performance of SecureVector is the choice of hyper-parameters $\{K, L, M\}$.
This part introduces three principles for a better parameter selection in practice.

\noindent\textbf{Permutation Degree.} 
A dis-similar feature $\bm c_x$ permuted from the original $\bm x$ is preferred to conceal privacy.
The downside of a heavy permutation is the substantial numerical error for the quantized $w_x$.
Based on experimental studies in \cref{sec:apexp_param}, we recommend $M = L/128$ for a better trade-off.

\noindent\textbf{Numerical Capcacity.}
Key size $S\geq(2K+9)\log_2(4L)$ can satisfy the capacity according to \cref{lemma:keysize} .
Therefore, $L$ can be constrained as $L \leq  2^{S/(2K+9)-2}$ to satisfy the requirement.

\noindent\textbf{Security Level.} 
The security level should be as large as possible for the sake of privacy.
In our method, a feature $\bm x$ is permuted into $\bm c_x$ by $\{u_x^i, s_x^i\}_{i=1}^K$.
Recovering $\bm x$ with a brute-force attack requires traversing all $2^K\cdot(2L)^K$ possible combinations, which leads to $2K + K\cdot \log_2(L)$ security bits.
Under a specific key size $S$ of Paillier, we aim to find the optimal $K$ for the best security bit  ($b$).
Following the second principle, taking the upper bound $L$ yeilds the highest corresponding security bit as $b = 2K + K\cdot \log_2(\lfloor 2^{S/(2K+9)-2} \rfloor)$.
Because a feature is segmented into $K$ equal parts, a feasible $K$ should be a divisor of the feature dimension $d$, which is normally set as powers of two (\eg, 512).
In the end, we present the recommended settings for $K$ under different key sizes in \cref{tab:invisible_securitybit}.
{
  \setlength{\tabcolsep}{3.6pt}
  \begin{table}[htb!]
    \centering
    \footnotesizea
    \caption{The optimal settings under different key sizes $S$.
    }    \label{tab:invisible_securitybit}
    \begin{tabular}{c|cccccc}
      \hline
      Key size $S$ & 128 & 256 & 512 & 1024 & 2048 & 4096 \\
      The best $K$ & 16 & 16 & 64 & 64 & 128 & 256 \\
      \hline
      Security Level (bits) & 48 & 98 & 229 & 477 & 989 & 2011 \\
      \hline
    \end{tabular}
  \end{table}
}

\subsection{Security Analyses}\label{sec:method_an}
A template protection scheme should meet \textit{Irreversibility}, \textit{renewability}, \textit{unlinkability} and \textit{performance retainability} as stated by ISO/IEC IS 24745~\cite{ISO24745}.
In this section, we present security analyses to demonstrate that SecureVector can successfully satisfy these requirements.

\noindent\textbf{Irreversibility}.
In SecureVector, a feature $\bm x$ is enrolled into $\{\bm c_x, \tilde T_x\}$, where the latter element $\tilde T_x$, protected by Paillier,  is confidently assumed to be secure.
The design permutation further equips SecureVector with a high security bit (\eg,  \cref{fig:intro0}), which contribute to the irreversibility property.

\noindent\textbf{Renewability}.
Once $\{\bm c_x, \tilde T_x\}$ is under the risk of compromise, we only need to re-sample a permutation and generate a new $\{\bm c_x', \tilde T_x'\}$ following the enrollment steps.

\noindent\textbf{Unlinkability}.
Various permutations enroll $\bm x$ into different $\{\bm c_x, \tilde T_x\}$.
Even the same permutation can lead to different $\tilde T_x$ because of the inherent mechanism in Paillier.
That reaches the unlinkability.

\noindent\textbf{Performance retainability}.
If ignoring numerical errors, only the quantization of $W_x$ introduces errors in SecureVector.
We relieve the quantization errors by dividing the range into a sufficient number of intervals (\ie, $2^{15}\cdot L^8>32,000$).
As a result, the performances are highly preserved.

\section{Experiments}
\label{sec:exp}

In the main text, we compare SecureVector with current plug-in template protection methods on face recognition.
In \cref{sec:exp_ext_face,sec:exp_reid,sec:exp_ir}, we show extra face-related results as well as experiments on person re-identification and image retrieval tasks.
SecureVector's resistance to malicious attacks is verified by \cref{sec:exp_pa}.
All these results reveal the superiority and generality of our method.

\noindent
\textbf{Baselines.}
We implement several recent plug-in methods including IronMask~\cite{kim2021ironmask}, \citet{boddeti2018secure} and \citet{dusmanu2021privacy} with recommended parameters used in the corresponding papers.
Specifically, we use $\alpha=16$ for IronMask~\cite{kim2021ironmask} and set the precision to be 125 for \citet{boddeti2018secure}.
Hybrid lifting mode with dimension 4 is used for \citet{dusmanu2021privacy}, which balances the accuracy and privacy as stated by the authors.
For our SecureVector, we use key size 512 and denote it as SecureVector-512.

\noindent
\textbf{Evaluation.}
We evaluate these methods by comparing security bits, reporting recognition performances on popular benchmarks (\ie, LFW~\cite{huang2008labeled}, CFP-FP~\cite{sengupta2016frontal}, AgeDB-30~\cite{moschoglou2017agedb}, IJB-B~\cite{whitelam2017iarpa} and IJB-C~\cite{maze2018iarpa}) and examining resource costs (\ie, running time, memory and storage consumptions).
For fair comparisons, methods are tested by the same machine with Intel(R) Xeon(R) CPU E5-2630 running at 2.20GHz and 256G RAM.
Features are extracted by a open-sourced ResNet100 model\footnote{https://github.com/IrvingMeng/MagFace.}.

{
  \setlength{\tabcolsep}{1pt}
  \begin{table*}[htb!]
    \centering
    \caption{Performances of plug-in feature protection methods (Prot.: Protection).
    }\label{tab:main}
    \footnotesizea
    \begin{tabular}{lllllllrrrrrrr}
      \hline
      \multirow{2}{*}{Method} & Security & Full &\multicolumn{3}{c}{Verification Accuracy (\%)}&& \multicolumn{2}{c}{Avg. Time(ms)} && \multicolumn{2}{c}{Avg. Memory(Kb)} & Storage  \\
      \cline{4-6} \cline{8-9} \cline{11-12} 
                              & Bit & Prot. &LFW& CFP-FP & AgeDB  && Enroll& Match && Enroll& Match & (Kb) \\
      \hline
      Direct &- & -&99.82 &98.41 &98.28 & & - &- & &- &- & -\\
      \hline
      IronMask~\cite{kim2021ironmask} &115  & no &89.30{ (-10.52)} &58.01{ (-40.04)} &57.87{ (-40.41)} &&557.92 &0.47 &&2040.71 &2061.33 &2100 \\
      \citet{boddeti2018secure} &128 & yes &99.82{ (\textbf{-0.00})} &98.34{ (-0.07)} &98.23{ (-0.05)} && 2.16 &23.6 &&24.44 &451.42 &87\\
      \citet{dusmanu2021privacy} & \textbf{NA} & yes&98.33{ (-1.49)} &88.51{ (-9.89)} &89.23{ (-9.05)} &&1.01 & \textbf{0.16} &&11.03 &42.96 &21 \\
      \hline
      SecureVector-512 & \textbf{229} & yes &99.82{ (\textbf{-0.00})} &98.41{ (\textbf{-0.00})} &98.28{ (\textbf{-0.00})} &&\textbf{0.59} & 0.30 && \textbf{0.59} & \textbf{10.81} & \textbf{5} \\
      \hline
    \end{tabular}
  \end{table*}
}

{
  \setlength{\tabcolsep}{2pt}
  \begin{table*}[htb!]
    \centering
    \caption{Verification accuracy (\%)  on challenging benchmarks.
    }\label{exp:tab_ijb}
    \footnotesizea
    \begin{tabular}{llllllll}
      \hline
      \multirow{2}{*}{Method} & \multicolumn{3}{c}{IJB-B (TAR@)} && \multicolumn{3}{c}{IJB-C (TAR@)} \\
      \cline{2-4} \cline{6-8}
                              &  FAR=1e-4 & FAR=1e-3 & FAR=1e-2 && FAR=1e-4 & FAR=1e-3 & FAR=1e-2 \\
      \hline
      Direct  & 94.33 & 96.23 & 97.53  && 95.81 & 97.26 & 98.27\\
      \hline
      \citet{boddeti2018secure}  & 94.27 (-0.06) & 96.20 (-0.03) & 97.52 (-0.01)  &&  \multicolumn{3}{c}{-----------------\textbf{NA}----------------}\\
      \citet{dusmanu2021privacy}  & 0.00{ (-94.33)} & 0.65{ (-95.58)} & 88.70{ (-8.83)} &&  0.00{ (-95.81)} & 9.73{ (-87.53)} & 91.67{ (-6.60)} \\
      \hline
      SecureVector-512  & 94.33{ (\textbf{-0.00})} & 96.23{ (\textbf{-0.00})} & 97.53{ (\textbf{-0.00})} && 95.81{ (\textbf{-0.00})} & 97.26{ (\textbf{-0.00})} & 98.27{ (\textbf{-0.00})}  \\
      \hline
    \end{tabular}
  \end{table*}
}

\noindent
\textbf{Results on LFW/CFP-FP/AgeDB.}
The results are shown in \cref{tab:main}.
IronMask~\cite{kim2021ironmask} can provide a security level of 115.
However, it can only compare a pair of a raw feature and a protected feature, 
and therefore cannot provide full protection to all features.
Besides, the verification accuracies acutely collapse on these benchmarks 
(\ie, -40.41\% on AgeDB).
In addition, it consumes massive resources as high-dimensional matrix operations are involved.
Specifically, enrolling one feature requires 557.92ms and 2040.71Kb of memory, and storing an enrolled feature requires 2100Kb of space.
\citet{boddeti2018secure} utilizes a fully homomorphic encryption (FHE) scheme called BFV~\cite{fan2012somewhat} and speeds up the process by a batching technique.
Its security bit is 128 and performance degradations are small (\ie, -0.00\%, -0.07\% and -0.05\% for LFW/CFP-FP/AgeDB).
However, FHE still suffers from high computational costs, especially when matching two enrolled records.
For example, the average running time is 23.6ms, over 50 times slower than other methods.
Note that matching occurs much more frequently than enrollment (\eg, if we have 100/100 features in the probe/gallery sets, there will be 200 enrollments but 10K comparisons).
The slow running speed limits its usage in large-scale applications.
\citet{dusmanu2021privacy} has no theoretical privacy guarantee, so we put a "NA" here.
The approach runs the most efficiently among all baselines because only basic linear algebra operations are involved.
However, its verification accuracies drops are still too high (\ie, -9.89\% for CFP-FP and -9.05\% for AgeDB) for deployments in sensitive scenes such as biometric authorization.

Our proposed SecureVector-512 can protect all involved features with the highest security bit of 229, which almost doubles the best value of the baselines.
Besides, the verification accuracies are consistent with those when raw features are used directly.
More essentially, our method runs more efficiently and consumes much fewer memory/storage resources than the baselines.
To be more specific, the average running time is 0.59ms for enrollment.
This value is nearly half of the best value of 1.01ms among baselines.
The average time for matching a pair of sanitized features is 0.30ms, which is still the second-best.
For the consumption in memory and storage, our methods overwhelm the corresponding values of all baselines.
These results fully demonstrate the advantages of SecureVector in security level, performance retainability, running speed, and resource consumption.

\noindent
\textbf{Results on IJB-B/IJB-C.}
IJB-B~\cite{whitelam2017iarpa} and IJB-C~\cite{maze2018iarpa} are  top challenging benchmarks for face recognition.
The evaluation on IJB-B requires enrolling 12K features and matching 8M pairs.
For IJB-C, these numbers increase up to 23K and 16M respectively.
We present the true acceptance rates (TARs) when using different false acceptance rates (FARs) in \cref{exp:tab_ijb}.
IronMask~\cite{kim2021ironmask} is not listed as it only produces binary matching scores  instead of contiguous values, which makes the TAR/FAR meaningless in the benchmarks.
We also skip \citet{boddeti2018secure} on IJB-C because the evaluation takes too long to finish.
The reason is that \citet{boddeti2018secure} suffers from low matching speed caused by the FHE scheme, and this low efficiency issue is further aggravated by the tremendous pairs (16M on IJB-C).
For results on IJB-B, \citet{boddeti2018secure} shows high persistence in verification accuracies with only -0.06\% on TAR when the FAR  is 1e-4.
Although achieving acceptable results on LFW/CFP-FP/AgeDB, \citet{dusmanu2021privacy} performs poorly on the challenging benchmarks.
The TAR even collapses into zero when FAR is 1e-4, which denies the applicability on utility-required and large-scale scenarios.
In contrast, the SecureVector-512 has zero performance degradation while is still carried out normally.

\section{Conclusions and Limitations}
\label{sec:conclusion}

In this paper, we propose a plug-in module called SecureVector to prevent privacy leakage in image retrieval systems.
By the carefully designed schemes of random permutations and 4L-DEC converting, SecureVector successfully achieves privacy-preserving, real-time, and lossless feature matching.
Extensive experiments demonstrate the superiority of our approach over the state-of-the-arts on matching accuracies, resource consumption, and security level.

SecureVector mainly has the following limitations:
Firstly, the security of cryptographic methods~\cite{troncoso2013fully, boddeti2018secure, morampudi2020privacy, engelsma2020hers} are with the assumption that the private key is confidential, and SecureVector is no exception.
Secondly, we still cannot achieve comparable speed as direct matching raw features.
Future works, including designing new schemes, are expected to increase the security or speed up the algorithm.

\clearpage

{\small
  \bibliographystyle{plainnat}
  \bibliography{egbib}
}

\appendix

\clearpage
\appendix
\section{Appendix}
\subsection{Mathematical Proofs}

In this section, we provide mathematical proofs for SecureVector.

\subsubsection{Bounds of $W_x$}\label{sec:ext_bound}
\begin{lemma}
  \label{lemma:wf}
  The magnitude $W_x$ of permuted embedding $\bm b_x$ is bounded in the range $[e^{-L/M}, e^{(L-1)/M}]$.
\end{lemma}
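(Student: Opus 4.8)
The plan is to compute $W_x^2 = \|\bm b_x\|_2^2$ directly, exploiting the block structure of $\bm b_x$ together with the $l_2$ normalization of $\bm x$. First I would write the squared norm as a sum over the $K$ segments, $W_x^2 = \sum_{i=1}^K \|\bar{\bm b}_x^i\|_2^2$, which is valid because the blocks $\bar{\bm b}_x^i$ occupy disjoint coordinates (they are the pieces of the concatenation $\bm b_x = (\bar{\bm b}_x^1, \cdots, \bar{\bm b}_x^K)$). Substituting the definition $\bar{\bm b}_x^i = s_x^i \cdot e^{(u_x^i - L)/M} \cdot \bar{\bm x}_i$ and using $|s_x^i| = 1$, the sign factor drops out of the norm and each block contributes exactly $e^{2(u_x^i - L)/M} \|\bar{\bm x}_i\|_2^2$. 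This isolates the scaling factor as the only quantity that varies across admissible random draws.

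Next I would bound the per-segment scaling factor using the range $u_x^i \in \{0, 1, \cdots, 2L-1\}$. Since $t \mapsto e^{(t-L)/M}$ is increasing, the extreme draws $u_x^i = 0$ and $u_x^i = 2L-1$ give $e^{-L/M}$ and $e^{(L-1)/M}$ respectively, so the termwise inequalities $e^{-2L/M}\,\|\bar{\bm x}_i\|_2^2 \leq e^{2(u_x^i - L)/M}\,\|\bar{\bm x}_i\|_2^2 \leq e^{2(L-1)/M}\,\|\bar{\bm x}_i\|_2^2$ hold for every $i$. Summing over $i$ and factoring out the (constant) exponential bounds reduces the problem to controlling $\sum_{i=1}^K \|\bar{\bm x}_i\|_2^2$, which equals $\|\bm x\|_2^2 = 1$ because the input feature is $l_2$ normalized and the segments partition its coordinates.

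This chain leaves $e^{-2L/M} \leq W_x^2 \leq e^{2(L-1)/M}$; taking square roots (all quantities are positive) yields the claimed range $W_x \in [e^{-L/M}, e^{(L-1)/M}]$. I do not expect any serious obstacle, as the argument is a short sequence of elementary inequalities. The one point that warrants care is the appeal to $l_2$ normalization combined with the fact that the statement must hold for \emph{every} admissible permutation rather than in expectation: this is exactly why I bound each segment termwise at its worst-case value before summing, so that the envelope survives for all draws of $\{u_x^i, s_x^i\}_{i=1}^K$ simultaneously.
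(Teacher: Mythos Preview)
Your proposal is correct and follows essentially the same approach as the paper's proof: expand $W_x^2=\|\bm b_x\|_2^2$ blockwise, drop the sign via $(s_x^i)^2=1$, bound each factor $e^{2(u_x^i-L)/M}$ using $u_x^i\in\{0,\dots,2L-1\}$, and collapse $\sum_i\|\bar{\bm x}_i\|_2^2=\|\bm x\|_2^2=1$ before taking square roots. The only cosmetic difference is that the paper writes the sum at the coordinate level rather than the segment level, but the argument is the same.
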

\begin{proof}
  The raw embedding $\bm x$ is normalized, \textit{i.e.}, $\|\bm x\|^2_2= (x_1^2+x_2^2+\cdots+x_d^2) = 1$.
  As $u_x^i \in \{0, 1, \cdots, 2L-1\}$ and $s_x^i \in \{-1, 1\}$ for $i=1,2,\cdots K$, it's easily to conclude that  $-L \leq (u_x^i-L) \leq L-1 $ and $(s_x^i)^2 = 1$.
  Then we can have
  \begin{equation*}
    \footnotesizea
    \begin{split}
      W_x^2 &= \|\bm b_x\|^2_2\\
      &= \|(s_x^1 e^{(u_x^1-L)/M}  \bar x_1, s_x^2 e^{(u_x^2-L)/M}  \bar x_2, \cdots, s_x^K  e^{(u_x^K-L)/M}  \bar x_K )\|^2_2 \\
      &= (e^{2(u_x^1-L)/M}x_1^2 + e^{2(u_x^1-L)/M}x_2^2 +  \cdots + e^{2(u_x^1-L)/M}x_{\bar d}^2) + \\
      & \cdots (e^{2(u_x^K-L)/M}x_{(K-1)\bar d + 1}^2 +  +  \cdots + e^{2(u_x^K-L)/M}x_{d}^2) \\
      & \geq  e^{-2L/M} \cdot(x_1^2+x_2^2+\cdots+x_d^2) =  e^{-2L/M}
    \end{split}
  \end{equation*}
  and
  \begin{equation*}
    \footnotesizea
    \begin{split}
      W_x^2 &= \|\bm b_x\|^2_2\\
      &= (e^{2(u_x^1-L)/M}x_1^2 + e^{2(u_x^1-L)/M}x_2^2 +  \cdots + e^{2(u_x^1-L)/M}x_{\bar d}^2) + \\
      &  \cdots (e^{2(u_x^K-L)/M}x_{(K-1)\bar d + 1}^2 +  +  \cdots + e^{2(u_x^K-L)/M}x_{d}^2) \\
      & \leq e^{2(L-1)/M} \cdot(x_1^2+x_2^2+\cdots+x_d^2) =  e^{2(L-1)/M}
    \end{split}
  \end{equation*}
  Therefore, $W_x \in [e^{-L/M}, e^{(L-1)/M}]$.
\end{proof}

\subsubsection{Additional Privacy Analysis}

The adversary may also attempt to recover the original features by the intermediate matching results.
That is also infeasible because the number of equations is much less than that of variables as indicated by \cref{lemma:attack}.
Moreover, all involved variables are integers and form a solution space with large volumes.
It's well-known that integer problems are usually NP-hard to solve.
That also highly increases the hardness for successful attacks.

\begin{lemma}
  \label{lemma:attack}
  An adversary cannot recover the features from the matching results.
\end{lemma}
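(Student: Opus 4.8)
The plan is to show that the quantities an adversary can observe during matching are strictly fewer than those needed to invert the permutation, and that the leftover ambiguity is a bounded integer system with an exponentially large solution set. First I would pin down exactly what ``the matching results'' expose. Since $\tilde T_x, \tilde T_y$ never leave the encrypted domain and only their product is decrypted, the $4L$-DEC decryptor reveals precisely the sums $\{u_z^i, v_z^i\}_{i=1}^K$ and $w_z$ guaranteed by \cref{lemma:uvw}, together with the public vectors $\bm c_x, \bm c_y$ and the scalar score of \cref{lemma:score}. Inverting \cref{eq:b2c} gives $\bar {\bm x}_i = W_x\,\bar {\bm c}_x^i / (s_x^i\, e^{(u_x^i-L)/M})$, so recovering $\bm x$ is equivalent to pinning down the per-feature secrets $\{u_x^i, s_x^i\}_{i=1}^K$; the scale $W_x$ is then forced by $\|\bm x\|_2 = 1$, which is why these discrete parameters carry the entire secret.

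Next I would make the counting explicit. The unknowns attached to the pair $(\bm x, \bm y)$ are $\{u_x^i, v_x^i, u_y^i, v_y^i\}_{i=1}^K$ and $w_x, w_y$, i.e.\ $4K+2$ integers, whereas \cref{lemma:uvw} supplies only the $2K+1$ summation identities $u_x^i+u_y^i=u_z^i$, $v_x^i+v_y^i=v_z^i$, and $w_x+w_y=w_z$. The score equation adds no independent constraint, since it is already a function of those sums and of the known $\bm c_x, \bm c_y$. As $4K+2 > 2K+1$ for every $K\geq 1$, the system is underdetermined: each split $u_z^i = u_x^i+u_y^i$ with both summands in $\{0,\dots,2L-1\}$ admits $\Theta(L)$ integer solutions, and each parity constraint leaves two admissible sign pairs, so the feasible set of $(\{u_x^i, s_x^i\}_{i=1}^K)$ has cardinality on the order of $(2L)^K$, matching the $2K + K\log_2 L$ security estimate of \cref{sec:select_param}. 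I would conclude that the true $\bm x$ is indistinguishable from exponentially many equally consistent candidates, and that isolating it reduces to solving a bounded integer program, which is NP-hard in general.

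The main obstacle I anticipate is ruling out that auxiliary observations collapse this ambiguity. In a realistic deployment a single probe $\bm x$ is matched against an entire gallery, so an adversary could collect many identities $u_x^i + u_{y_j}^i = u_{z_j}^i$ all sharing the same $u_x^i$, which naively resembles a growing, overdetermined system. The key point I would stress is that every new partner $\bm y_j$ introduces its own fresh secrets $\{u_{y_j}^i, v_{y_j}^i, w_{y_j}\}$, so each additional batch of $2K+1$ equations arrives bundled with $2K+1$ brand-new variables; the deficit of one feature's worth of unknowns therefore persists no matter how many matches are observed, preserving underdetermination. I would also need to verify that the public $\bm c_x$ and the scalar score leak only the permuted direction and a single inner product, neither of which constrains the discrete $\{u_x^i, s_x^i\}$ beyond the sum relations already counted. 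Arguing carefully that this side information cannot prune the $(2L)^K$-sized solution set, rather than the raw equation-counting, is where the real work lies.
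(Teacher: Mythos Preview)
Your proposal is correct and follows essentially the same approach as the paper: both argue that the matching results yield only $2K+1$ constraints (the sum identities $u_x^i+u_y^i=u_z^i$, the sign-product relations, and the $W_xW_y$ relation) against roughly $4K$ discrete unknowns, so the system is underdetermined and recovery is infeasible. Your treatment is in fact more thorough than the paper's---you quantify the residual solution set as $\Theta((2L)^K)$, handle the multi-match scenario, and flag the side-information caveat about $\bm c_x$---whereas the paper simply states ``$2K+1$ equations but $4K$ variables'' and stops.
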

\begin{proof}
  If an adversary successfully gets access to the intermediate results during matching, \ie, $\{\bm c_x, \tilde T_x\}$, $\{\bm c_y, \tilde T_y\}$, $\{u_z^i, s_z^i\}_{i=1}^K$ and $w_z$.
  Then the adversary can have the following equations:
  \begin{equation}
    \small
    \label{eq:tmp}
    \begin{aligned}
      & s_z^i = s_x^i \cdot s_y^i,  \quad  \forall i=1,2,\cdots, K \\
      & u_z^i = u_x^i  +  u_y^i, \quad \forall i=1,2,\cdots, K \\
      & e^{\frac{w_z - 32L^3}{16L^2M}} = W_x\cdot W_y \\
      & \text{\ where } W_x = \|(s_x^i\cdot e^{(u_x^i-L)/M}\bar {\bm x}_i, \cdots, s_x^K\cdot e^{(u_K^i-L)/M}\bar {\bm x}_K\|_2,\\
      & \qquad W_y = \|(s_y^i\cdot e^{(u_y^i-L)/M}\bar {\bm y}_i, \cdots, s_y^K\cdot e^{(u_K^i-L)/M}\bar {\bm y}_K\|_2.\\
    \end{aligned}
  \end{equation}
  However, there are $2K+1$ equations but $4K$ variables in total.
  That means the original features are impossible to be recovered, especially when $K$ is large.
\end{proof}

\subsubsection{Feature Matching for Unnormalized Features with Different Metrics}
\label{sec:ext_match}

In the main text, we assume that all features are normalized and the metric is the cosine distance.
This assumption may fail in some particular scenarios.
However, a few modifications can accommodate SecureVector to these cases.

\noindent
\textbf{Features are normalized and Euclidean distance is used.}
Note that we can compute the $\bm x^T\bm y$ by \cref{lemma:score} in the maintext.
Then the Euclidean distance of $\bm x, \bm y$ can be easily computed by
\begin{equation}
  \small
  \label{eq:m_c1}
  \begin{split}
    \|\bm x - \bm y\|_2^2 = \bm x^T\bm x + \bm y^T\bm y - 2\bm x^T\bm y = 2 - 2\bm x^T\bm y
  \end{split}
\end{equation}

\noindent
\textbf{Features are unnormalized and dot production is used.}
We only need to replace \cref{eq:b2c} by
\begin{equation}
  \label{eq:m_c2}
  \small
  \begin{aligned}
    \bm c_x =\bm b_x/ W_x,\text{ where }  W_x = \|\bm b_x\|_2/\|\bm x\|_2.
  \end{aligned}
\end{equation}
Then \cref{lemma:score} in the main text can directly produce the dot production $\bm x^T\bm y$ with other things unchanged.

\noindent
\textbf{Features are unnormalized and Euclidean distance is used.}
In this scenario, we also need to replace \cref{eq:b2c} by \cref{eq:m_c2} and compute the dot production $\bm x^T\bm y$ first.
Note that
\begin{equation}
  \small
  \begin{aligned}
    \|\bm c_x\|_2 = \|\bm b_x / W_x\|_2 = \|\bm b_x\|_2 \frac{\|\bm x\|_2}{\|\bm b_x\|_2} = \|\bm x\|_2.
  \end{aligned}
\end{equation}
Then we can calculate  the Euclidean distance by
\begin{equation}
  \label{eq:m_c3}
  \small
  \begin{aligned}
    \|\bm x - \bm y\|_2^2 & = \bm x^T\bm x + \bm y^T\bm y - 2\bm x^T\bm y \\
    & = \|\bm c_x\|_2^2 + \|\bm c_y\|_2^2  - 2\bm x^T\bm y.
  \end{aligned}
\end{equation}

\subsection{Additional Experiments}
\label{sec:ext_exp}

\subsubsection{Comparisons of Paillier and FHE Schemes}\label{sec:addexp_he}
To verify the advantages of Paillier over current homomorphic encryption schemes such as BFV~\cite{fan2012somewhat} and CKKS~\cite{cheon2017homomorphic}, we utilize two open-source packages: phe\footnote{https://github.com/data61/python-paillier.} for Paillier and TenSEAL\footnote{https://github.com/OpenMined/TenSEAL.} for BFV and CKKS.
We record the time for encrypting plaintexts, adding and decrypting ciphertexts on 10K randomly generated integers.
\cref{tab:compare_he} presents the results.
BFV and CKKS can have 128 or 192 bits of security in practice.
When reaching 128 bits of security, BFV requires about $485\times$ memories and $50\times, 52\times, 82\times$ time for three operations compared to Paillier.
For CKKS, the numbers are around $727\times$ and $86\times, 69\times, 122\times$.
In addition to the speed advantages, Paillier can also easily reach high bits of security (\eg, 512 and 1024) while requires much less computational costs than BFV and CKKS.
We should note here that phe is implemented by python while the core codes in TenSEAL are written with C++.
The advantages of Paillier will be enlarged if the gap is bridged.

{
  \setlength{\tabcolsep}{3.6pt}
  \begin{table}[htb!]
    \centering
    \caption{Comparisons of homomorphic encryption schemes during encrypting, adding and decrypting integers.
      Here we set the dimension in BFV and CKKS  to be the common-used 4096.
    }
    \footnotesizea
    \begin{tabular}{cccccr}
      \hline
      Method & Bits of  & \multicolumn{3}{c}{Avg. Time (ms)} & Memory \\
      \cline{3-5}
             & Security  & Encrypt & Add& Decrypt & \multicolumn{1}{c}{(Mb)} \\
      \hline
      BFV~\cite{fan2012somewhat} & 128  & 1.962 & 0.211 & 0.739    & 3883  \\
      CKKS~\cite{cheon2017homomorphic}  & 128  & 3.372 & 0.278 & 1.103    & 5822  \\
      \hline
      \multirow{4}{*}{Paillier~\cite{paillier1999public}}
             & 128  & 0.039  & 0.004  & 0.009    & 8  \\
             & 256  & 0.087  & 0.006  & 0.025    & 9  \\
             & 512  & 0.335  & 0.010  & 0.100    & 11  \\
             & 1024 & 2.120  & 0.023  & 0.574    & 15 \\
      \hline
    \end{tabular}
    \label{tab:compare_he}
  \end{table}
}

\subsubsection{Experiments on Parameter $M$}
\label{sec:apexp_param}

To pick a proper value of $M$, we use pre-trained models\footnote{https://github.com/IrvingMeng/MagFace.} provided by authors of MagFace~\cite{meng2021magface} to extract 1K features from MS1Mv2~\cite{deng2019arcface}.
These features are of dimension 512, and we set $K=64$.
We permute each feature with $\{u_x^i, s_x^i\}_{i=1}^K$ under different $L/M$, and calculate the cosine similarities between the original and the permuted features.
\cref{fig:param_m} shows the results with each feature permuted 100 times.
If only using $\{u_x^i\}_{i=1}^K$, the cosine similarities decrease if we increase $L/M$.
$L/M$ larger than 4 seems enough as the similarities become mostly less than 0.6, which can be regarded as ``dis-similar'' in face recognition.
If only using $\{s_x^i\}_{i=1}^K$, $\bm x$ and $\bm c_x$ are highly dis-similar.
With both $\{u_x^i\}_{i=1}^K$ and $\{s_x^i\}_{i=1}^K$ used, the similarity distributions are in complex patterns, which further benefits privacy protection.
We choose $L/M=128$, in the end, to be on the safe side.

\begin{figure}[htb!]
  \centering
  \subfloat[\label{fig:m1}]
  {\includegraphics[width=0.25\textwidth]{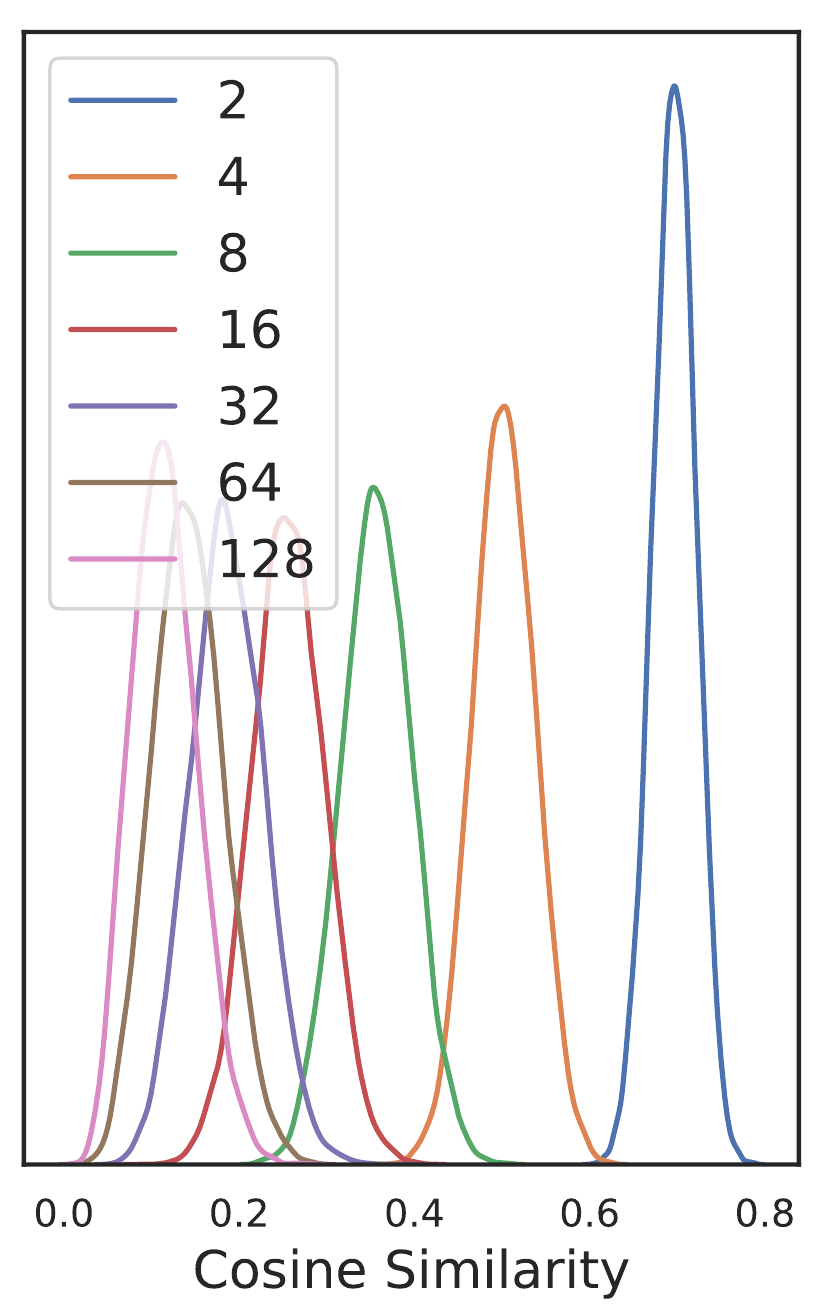}}
  \subfloat[\label{fig:m2}]
  {\includegraphics[width=0.25\textwidth]{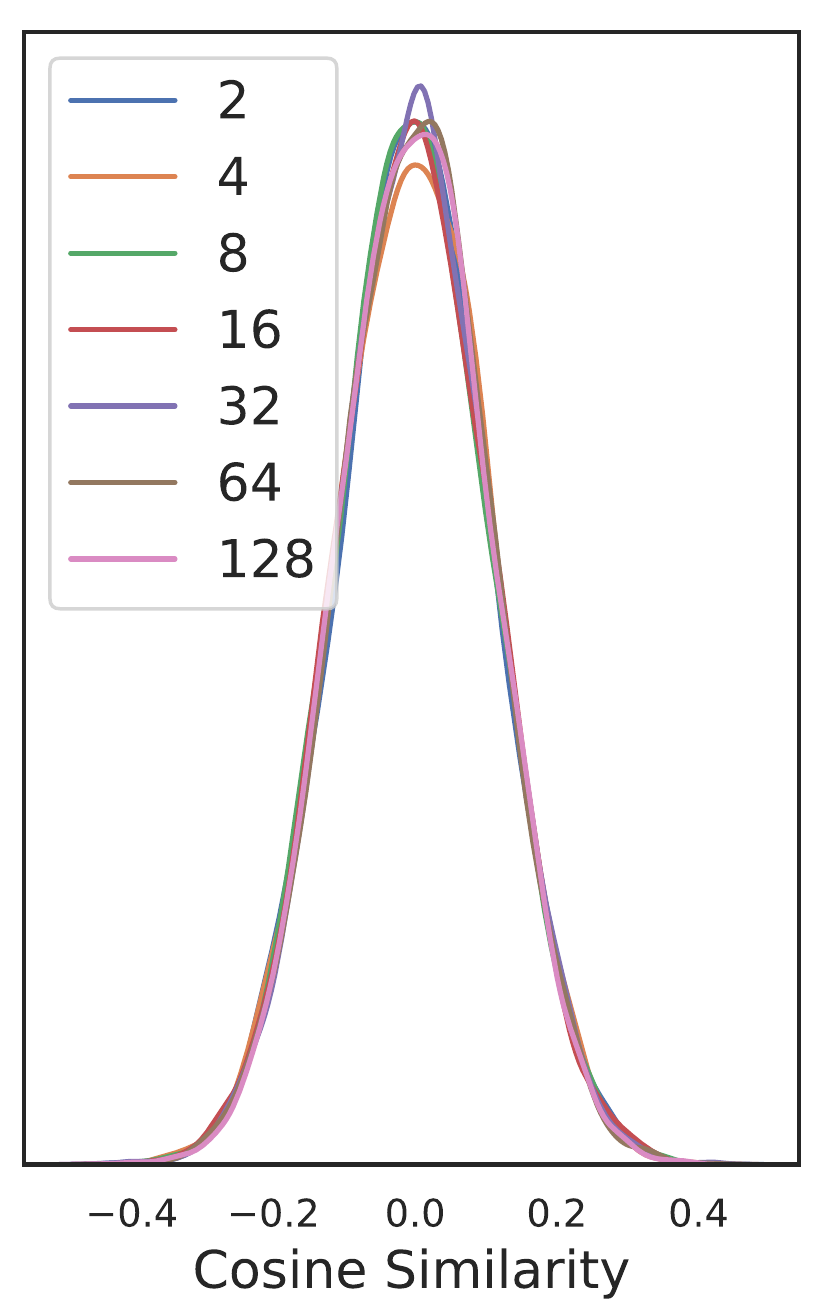}}
  \subfloat[\label{fig:m3}]
  {\includegraphics[width=0.25\textwidth]{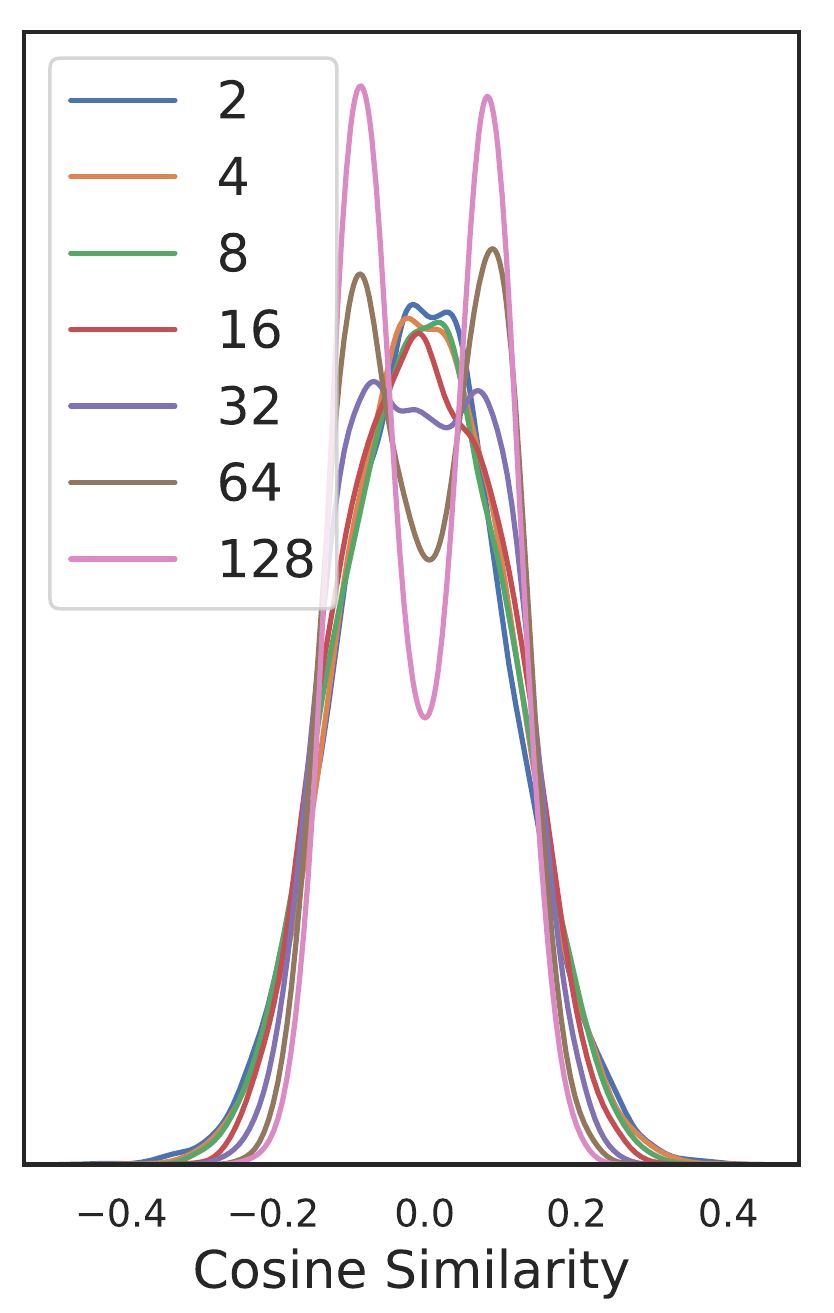}}
  \caption{ Cosine similarities of $\bm x$ and $\bm c_x$ with different $L/M$.
    (a) Permuted with $\{u_x^i\}_{i=1}^K$ only.
    (b) Permuted with $\{s_x^i\}_{i=1}^K$ only.
    (c) Permuted with both $\{u_x^i\}_{i=1}^K$ and $\{s_x^i\}_{i=1}^K$.
  } \label{fig:param_m}
\end{figure}

\subsubsection{Experimental Privacy Analysis}
\label{sec:exp_pa}

\cref{sec:method_an} has already theoretically proved the security of SecureVector.
This part shows that our method is resistant to known experimental attacks.
With exposed features given, two attacks are taken into account: inferring face images and inferring attributes (\eg, gender and race).


\begin{figure}[htb!]
  \centering
  \includegraphics[width=0.7\textwidth]{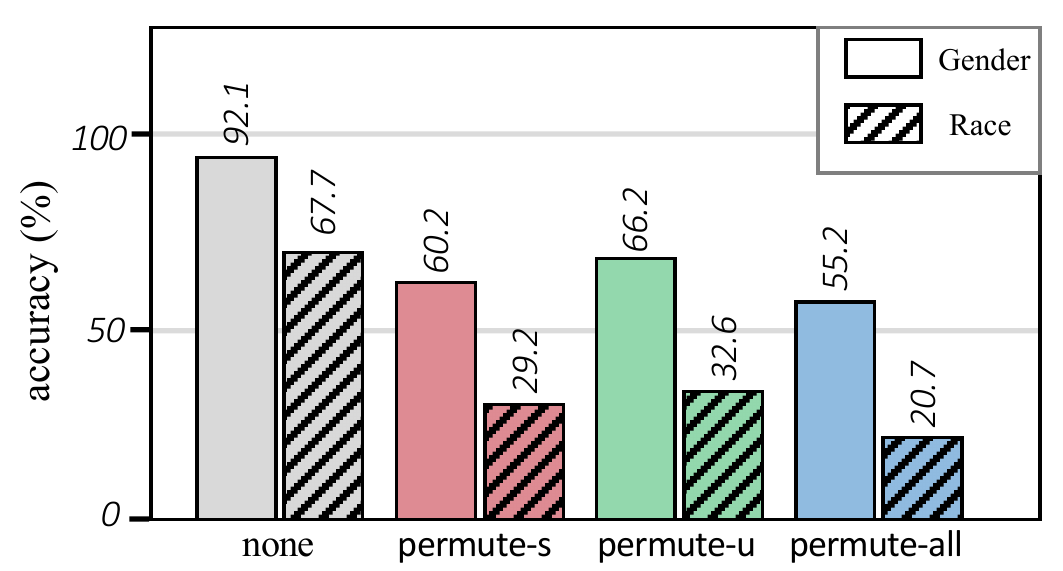}
  \caption{
    Accuracies of predicted genders and races on FairFace.
    ``none'' means results are predicted by raw features.
    ``permute-s'', ``permute-u''  and ``permute-all'' means features are permuted with sign flipping ($s_x^i$) , value scaling ($u_x^i$) and both, respectively.
  } \label{fig:vis_gr}
\end{figure}

\noindent
\textbf{Inferring distinguishable attributes.}
We extract features from 97K images (86K for training and 11K for testing) on FairFace~\cite{karkkainen2021fairface}, a dataset with balanced races (7 groups) and genders (2 groups).
Genders and races of a test image are predicted by its 10-nearest neighbors over the training images and the results are shown in \cref{fig:vis_gr}.
Without permutation, an adversary can infer genders and races with accuracies of 92.1\% and 67.7\%, which results in privacy threats.
Our strategy of sign flipping or value scaling on features can significantly relieve the threats.
If using both permutations, the accuracies are 55.2\% and 20.7\% respectively, which are very close to random guesses (\ie, 50\% and 14\%).

\noindent
\textbf{Inferring face images.}
We first sample 1K classes from MS1Mv2~\cite{deng2019arcface} dataset and extract features with the model we used in the main text.
Then, one sample from each class is chosen as the attacking target.
Following \citet{boddeti2018secure}, we consider two types of attacks in this scenario.
The first is a direct inversion attack, where features are projected into the corresponding face images with a pre-trained generative model.
We design a reconstruction network (modified from DCGAN~\cite{radford2015unsupervised}) 
which projects a 512 dimensional embedding to a face image.
We train the generative model on 10K classes from MS1Mv2~\cite{deng2019arcface}, which have no overlaps with the 1K classes for testing.
With the supervision of features and images, we train the network by Adam optimizer with L1 loss, which penalizes the absolute value of pixel differences between reconstructed and real images.
The learning rate is initialized from $0.001$ and divided by 10 every 10 epochs, and we stop the training at the 50th epoch.
The weight decay is $5e-4$ and batch size is $512$ during the training.

The second one is the nearest neighbor attack, which requires an adversary to find the top similar images from an existing face database.
The database is simulated by the remaining images/features of previously sampled 1K classes.
\cref{fig:vis_attack_ext} show our results.

\begin{figure*}[htb!]
  \centering
  \includegraphics[width=0.85\textwidth]{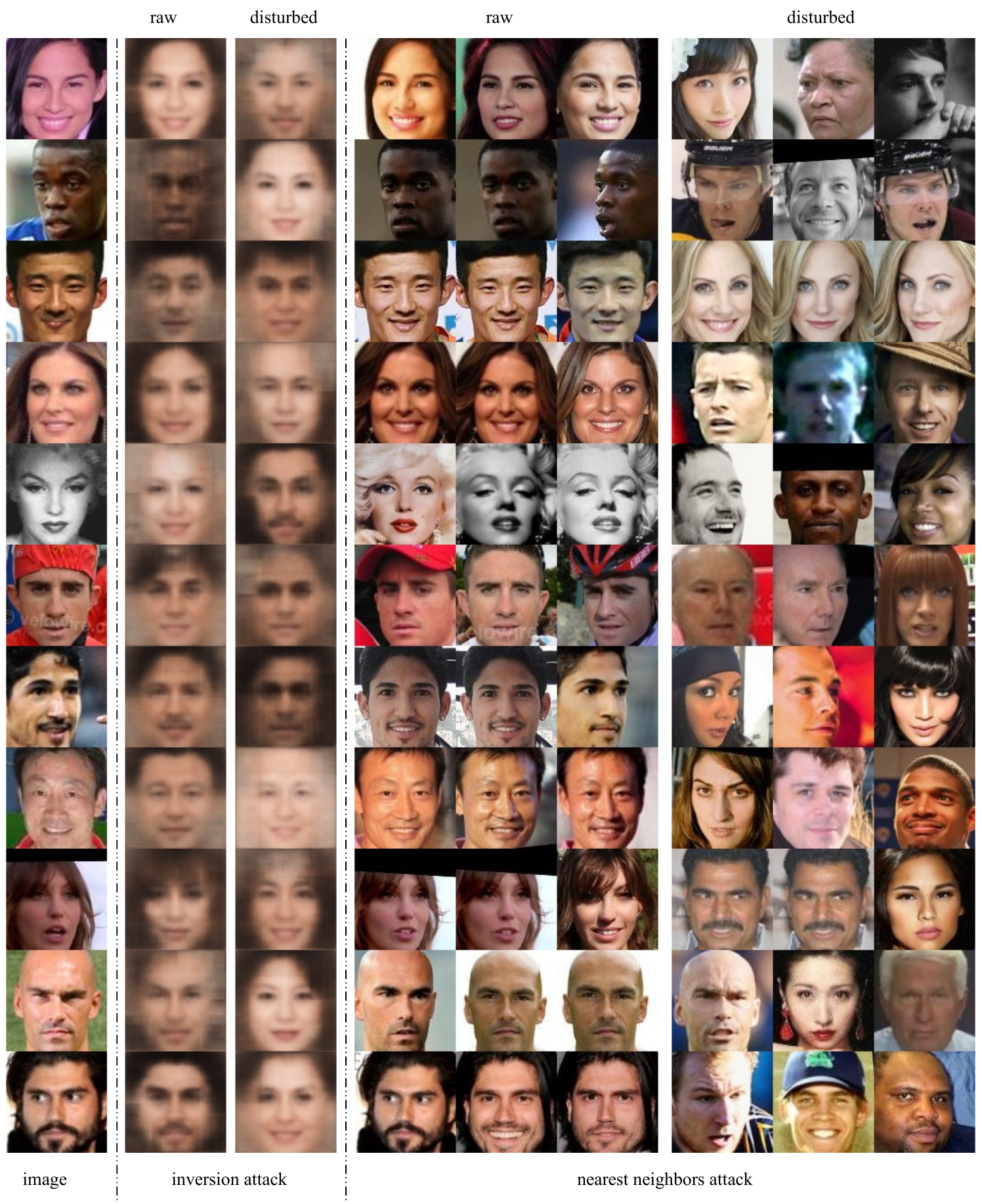}
  \caption{
    Several qualitative examples for privacy analysis:
    first the source images, then reconstructed images by the raw features and permuted probe features, and in the end the top-3 nearest faces with features before and after permutations.
  }\label{fig:vis_attack_ext}
\end{figure*}

If using the raw features, the reconstructed face images by the generative model look similar to the real faces in human eyes.
Besides,  one can easily find face images from the same identities with the nearest neighbor attack.
These all indicate that the raw features are vulnerable to malicious attacks.
In contrast, our permutation scheme can dramatically protect privacy as the reconstructed faces differ greatly from the original images and the nearest neighbors can no longer reveal correct identities.
This demonstrates the efficacy of the proposed random permutation scheme in SecureVector.

Our privacy-protection mechanism can be partially explained by \cref{fig:vis_feat}, where T-SNE visualizes features from 10 classes.
After permutations, features from different classes are no longer discriminative, which prevents an adversary from inspecting the privacy.
\begin{figure}[htb!]
  \centering
  \subfloat[Raw features.]{\includegraphics[trim={60pt 70pt 60pt 70pt},clip, width=0.4\textwidth]{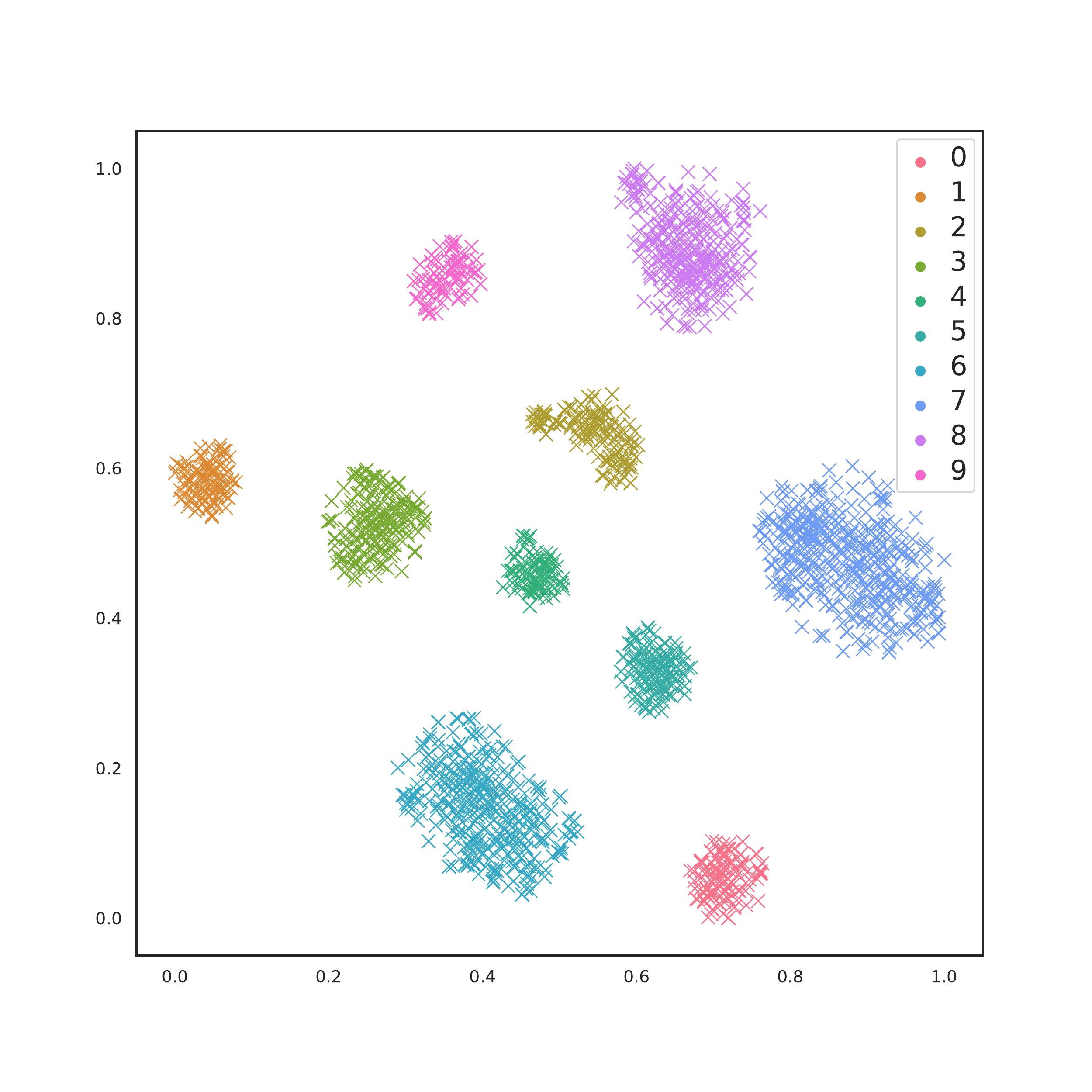}}
  \subfloat[Permuted features.]{\includegraphics[trim={60pt 70pt 60pt 70pt},clip,width=0.4\textwidth]{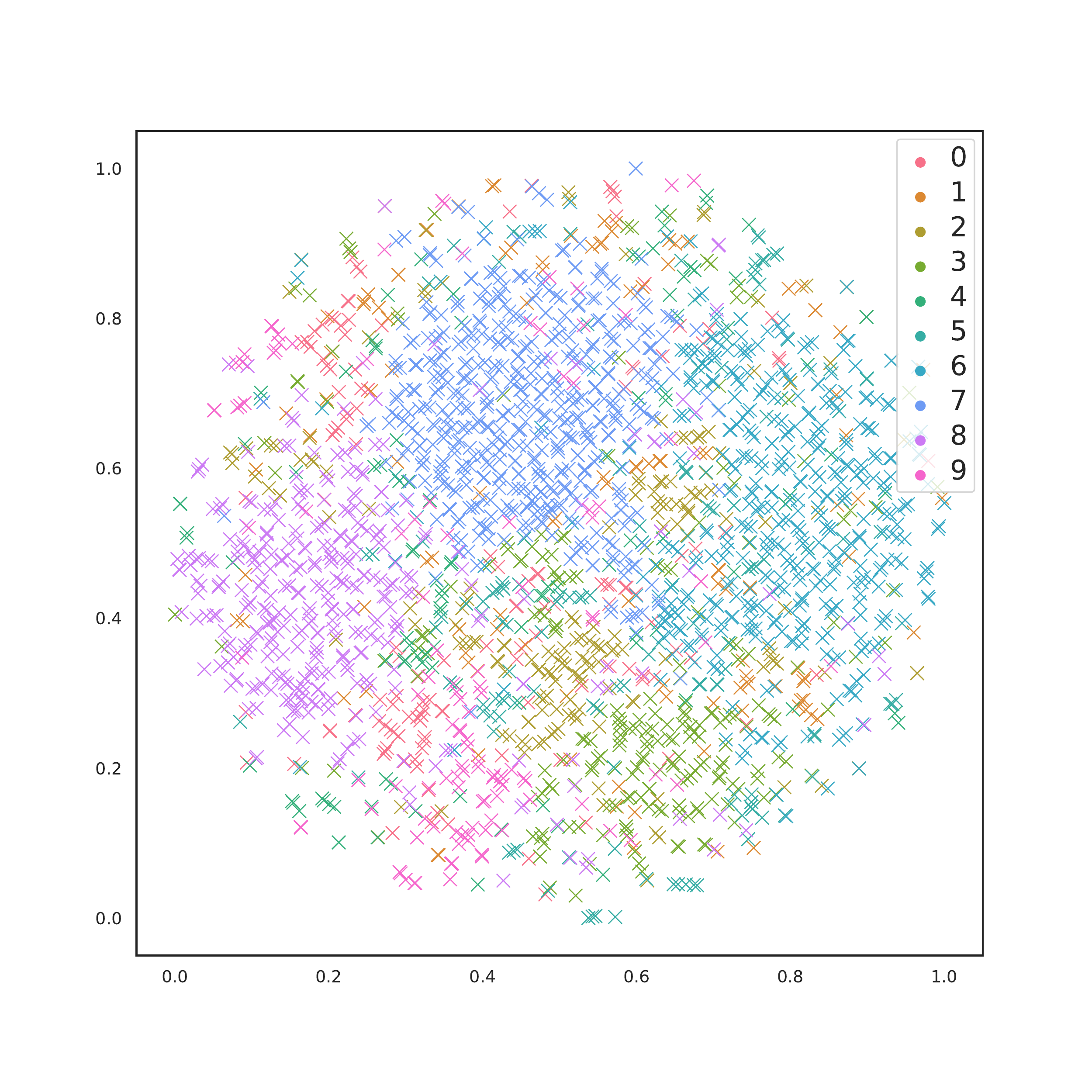}}
  \caption{Visualization of 10 classes from MS1Mv2 by T-SNE.
  }
  \label{fig:vis_feat}
\end{figure}{}

\subsubsection{Additional Experiments on Face Recognition}
\label{sec:exp_ext_face}
We further evaluate these methods with three additional  recognition models $\bm \phi_1, \bm \phi_2, \bm \phi_3$ which are trained with different settings (details can be found in \cref{tab:fr_details}).
\cref{tab:fr_om} presents the results, from where we can see that the baselines more or less encounter performance drops while our SecureVector does not suffer from that issue..

{
  \begin{table}[htb!]
    \centering
    \caption{Details of feature extractors.}
    \label{tab:fr_details}
    \footnotesizea
    \begin{tabular}{cccc}
      \hline
      Model & Loss & Backbone & Training Dataset \\
      \hline
      $\bm \phi_1$ & MagFace~\cite{meng2021magface} & ResNet50 & MS1Mv2~\cite{deng2019arcface} 
                                                                 
      \\
      $\bm \phi_2$ & MagFace~\cite{meng2021magface} & ResNet18 & CASIA-WebFace ~\cite{yi2014learning} \\
      $\bm \phi_3$ & ArcFace~\cite{deng2019arcface} & ResNet18 & CASIA-WebFace~\cite{yi2014learning} \\
      \hline
    \end{tabular}
  \end{table}
}

{
  \setlength{\tabcolsep}{2pt}
  \begin{table}[htb!]
    \centering
    \caption{Performances of plug-in feature protection methods with different recognition models.}
    \label{tab:fr_om}
    \footnotesizea
    \begin{tabular}{cclll}
      \hline
      \multirow{2}{*}{Model} & \multirow{2}{*}{Method} &\multicolumn{3}{c}{Verification Accuracy (\%)}\\
      \cline{3-5}
                             & &LFW& CFP-FP & AgeDB\\
      \hline
      \multirow{5}{*}{$\bm \phi_1$} & Direct & 99.78 & 97.64 & 98.00\\
      \cline{2-5}
                             & IronMask~\cite{kim2021ironmask} & 86.23 (-13.55) & 54.63 (-43.01) & 55.38 (-42.62) \\
                             & \citet{boddeti2018secure} & 99.77 (-0.01) & 97.63 (-0.01) & 97.95 (-0.05) \\
                             & \citet{dusmanu2021privacy} & 97.85 (-1.93) & 85.44 (-12.2) & 88.07 (-9.93)\\
                             & SecureVector-512 & 99.78 (\textbf{-0.00}) & 97.64 (\textbf{-0.00}) & 98.00 (\textbf{-0.00})\\
      \hline
      \multirow{5}{*}{$\bm \phi_2$} & Direct & 99.20 & 93.23 & 93.20\\
      \cline{2-5}
                             & IronMask~\cite{kim2021ironmask} & 70.60 (-28.60) & 50.90 (-42.33) & 51.33 (-41.87) \\
                             & \citet{boddeti2018secure} &99.18 (-0.02) & 93.14 (-0.09) & 93.07 (-0.13) \\
                             & \citet{dusmanu2021privacy} &91.57 (-7.63) & 71.84 (-21.39) & 73.68 (-19.52) \\
                             & SecureVector-512 & 99.20 (\textbf{-0.00}) & 93.23 (\textbf{-0.00})& 93.20 (\textbf{-0.00})\\
      \hline
      \multirow{5}{*}{$\bm \phi_3$} & Direct & 99.03  & 91.97 & 93.02\\
      \cline{2-5}
                             & IronMask~\cite{kim2021ironmask} & 70.83 (-28.20) & 51.19 (-40.78) & 51.28 (-41.74) \\
                             & \citet{boddeti2018secure} & 99.07 (\textbf{+0.04}) & 91.93 (-0.04) & 92.98 (-0.04) \\
                             & \citet{dusmanu2021privacy} & 90.63 (-8.40) & 71.87 (-20.10) & 74.52 (-18.50)\\
                             & SecureVector-512 & 99.03 (-0.00) & 91.97 (\textbf{-0.00})& 93.02 (\textbf{-0.00})\\
      \hline
    \end{tabular}
  \end{table}
}

\subsubsection{Person Re-Identification}
\label{sec:exp_reid}

We adopt two public models\footnote{https://github.com/guxinqian/Simple-ReID.} with feature dimension 2048 as evaluators on the person re-identification task.
Market-1501~\cite{Zheng2015Scalable} is employed as the benchmark, where 3K images are queried over a gallery of 16K images.
Note that \citet{boddeti2018secure} and IronMask~\cite{kim2021ironmask} are not evaluated on the benchmark as their evaluations are computational heavy with 19K features to enroll and 53.6M pairs to match.
\cref{tab:exp_reid} shows our results, which demonstrate that \citet{dusmanu2021privacy} still suffers from severe performance drops while SecureVector achieves lossless accuracies on the task.

{
  \setlength{\tabcolsep}{3.5pt}
  \begin{table}[htb!]
    \centering
    \caption{Performances on the Market-1501 benchmark.}
    \label{tab:exp_reid}
    \footnotesizea
    \begin{tabular}{clll}
      \hline
      &  Method & mAP (\%)  & Top-1 (\%) \\
      \hline
      \multirow{3}{*}{Model 1}
      & Direct & 84.75 & 93.56 \\
      \cline{2-4}
      & \citet{dusmanu2021privacy} & 5.52 (-79.23) & 0.06 (-93.50) \\
      & SecureVector-512 & 84.75 (-0.00) & 93.56 (-0.00) \\
      \hline
      \multirow{3}{*}{Model 2}
      & Direct & 87.43 & 94.95 \\
      \cline{2-4}
      & \citet{dusmanu2021privacy} & 6.22 (-81.21) & 0.18 (-94.77)  \\
      & SecureVector-512 & 87.43 (-0.00)& 94.95 (-0.00)\\
      \hline
    \end{tabular}
  \end{table}
}

In \cref{fig:vis_ext_reid}, we present some visualizations for top-3 matching results when using the raw features, \citet{dusmanu2021privacy}  and SecureVector.
\citet{dusmanu2021privacy} cannot work properly as the top-3 results are incorrect in most cases.
In contrast, SecureVector can get exactly the same results as when using the raw features, which demonstrates the efficiency of our method.

\begin{figure*}[htb!]
  \centering
  \includegraphics[width=0.8\textwidth]{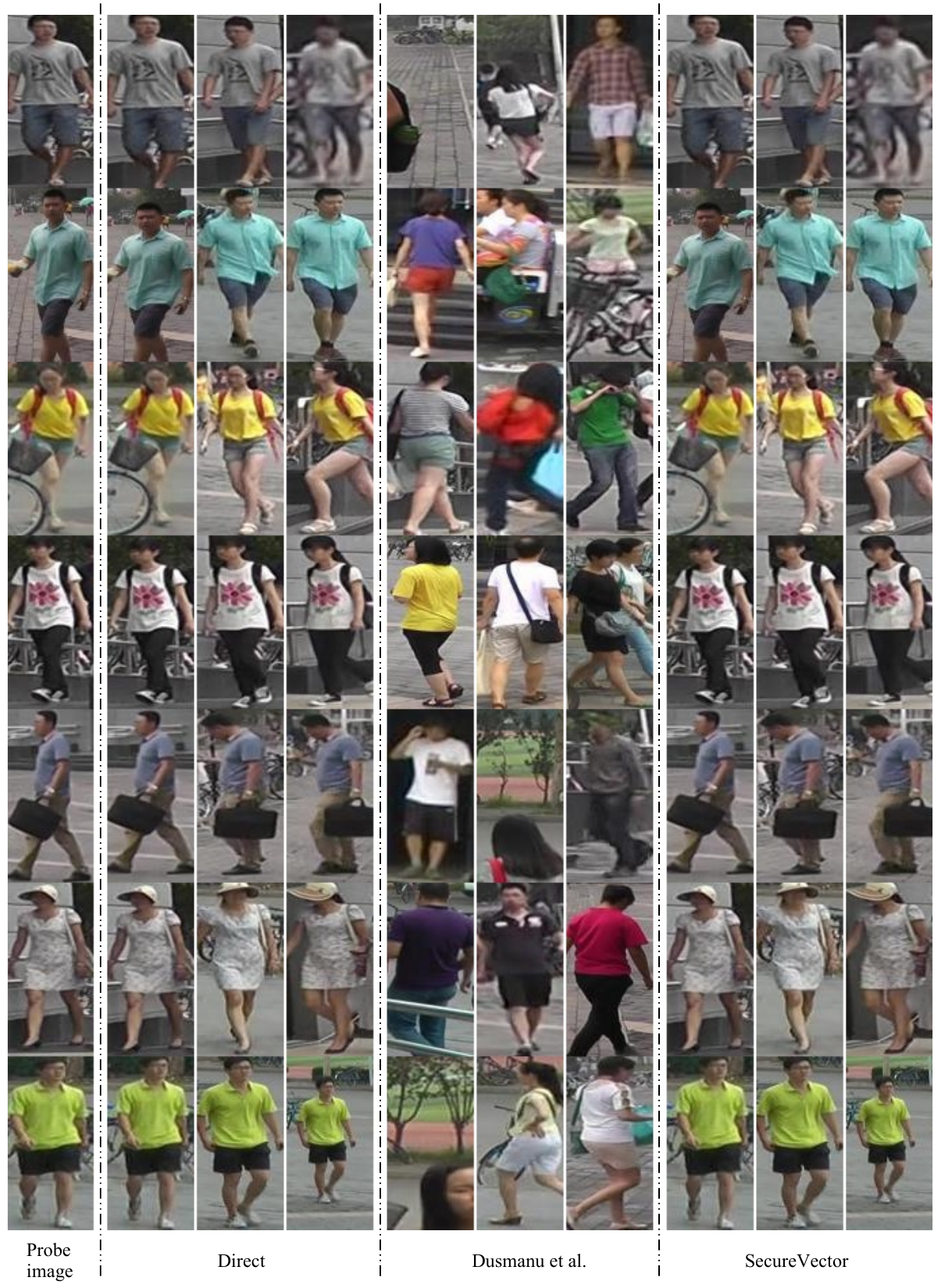}
  \caption{
    Top-3 matching results with different methods on the Market-1501 benchmark when using raw features, \citet{dusmanu2021privacy} and SecureVector.
  }\label{fig:vis_ext_reid}
\end{figure*}

\subsubsection{Image Retrieval}
\label{sec:exp_ir}

In this section, we evaluate template protection methods on image retrieval with two feature extractors\footnote{https://github.com/filipradenovic/cnnimageretrieval-pytorch} released by authors of \citet{radenovic2018fine}.
We choose two benchmarks including Oxford5k~\cite{philbin2007object} (55 query and 5063 gallery images) and Paris6k~\cite{philbin2008lost} (55 query and 6392 gallery images), and report the results in \cref{tab:exp_ir}.
SecureVector again proves its superiority in performance preservation.

{
  \setlength{\tabcolsep}{3.5pt}
  \begin{table}[htb!]
    \centering
    \caption{mAP (\%) on the image retrieval task.}
    \label{tab:exp_ir}
    \footnotesizea
    \begin{tabular}{lccccc}
      \hline
      \multirow{2}{*}{Method} & \multicolumn{2}{c}{Model 1} && \multicolumn{2}{c}{Model 2} \\
      \cline{2-3}
      \cline{5-6}
                              & Oxford5K & Paris6K && Oxford5K & Paris6K \\
      \hline
      Direct & 81.13 & 87.80 && 87.80 & 92.59 \\
      \hline
      IronMask~\cite{kim2021ironmask} & 2.62 & 6.35 && 32.32 & 34.71\\
      \citet{boddeti2018secure} & 9.33 & 2.16 && 11.23 & 19.65\\
      \citet{dusmanu2021privacy} & 81.10 & 87.83 && 87.80 & 92.48\\
      \hline
      SecureVector-512 & 81.13 & 87.80 && 87.80 & 92.59 \\
      \hline
    \end{tabular}
  \end{table}
}

\cref{fig:vis_ext_ir} is the visualization for the image retrieval task.
We present the top-3 results when using the raw features as well as different template protection methods.
Note that IronMask~\cite{kim2021ironmask} is ignored as it cannot produce similarity scores, which makes the top-3 results meaningless.
From the figure, we can see that \citet{dusmanu2021privacy} cannot match the query to the correct images.
\citet{boddeti2018secure} can generate the correct results in most cases, which however differ from the original ones sometimes.
In contrast, results from SecureVector are always aligned with those when using raw features, which can show the superiority of the proposed method.

\begin{figure*}[htb!]
  \centering
  \includegraphics[width=\textwidth]{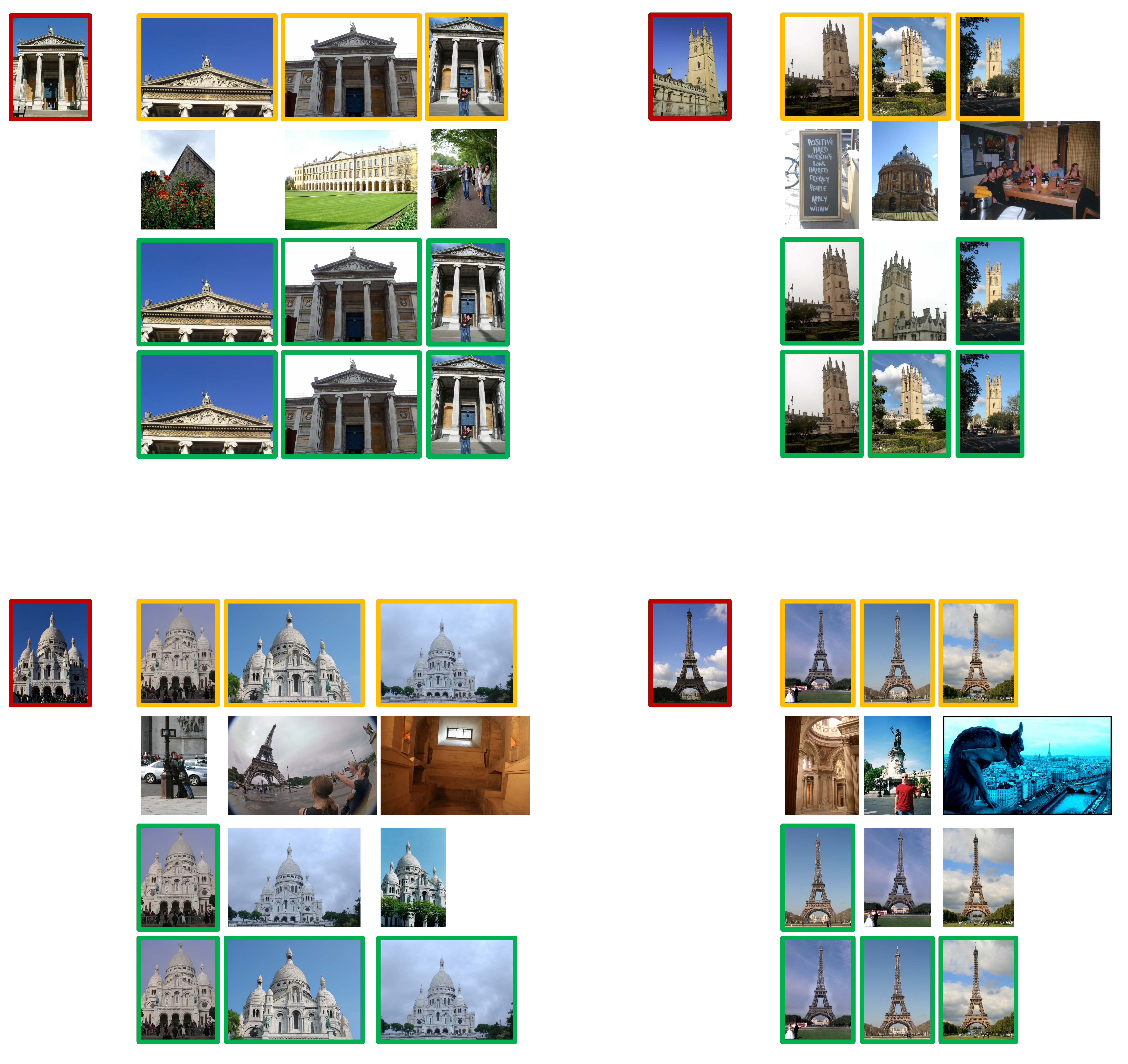}
  \caption{
    Top-3 nearest neighbors from the raw features, \citet{dusmanu2021privacy}, \citet{boddeti2018secure} and SecureVector, which corresponds to 4 rows in each block.
    The top two blocks are on Oxford5K while the bottom two blocks are on Paris6K.
    In each block, the image with the red bounding boxes is the query and images with the yellow bounding boxes are the top-3 similar images when using the raw features.
    Images with the green bounding boxes mean the queried images are the same as the results using the raw features.
    SecureVector can get the most precise results.
  }\label{fig:vis_ext_ir}
\end{figure*}





\end{document}